\documentclass[twoside]{article}

\usepackage[round]{natbib}

\usepackage[accepted]{aistats2026}
\usepackage{hyperref}
\usepackage{url}

\usepackage[utf8]{inputenc} 
\usepackage[T1]{fontenc}    
\usepackage{booktabs}       
\usepackage{amsfonts}       
\usepackage{nicefrac}       
\usepackage{microtype}      
\usepackage{graphicx}
\usepackage{pifont}
\usepackage{caption}
\usepackage{subcaption}
\usepackage{makecell}
\usepackage{adjustbox}
\usepackage{colortbl}        

\usepackage{multirow}
\usepackage{CJKutf8}
\usepackage[dvipsnames, table]{xcolor}

\usepackage{amsmath}
\usepackage{amssymb}
\usepackage{mathtools}
\usepackage{amsthm}
\usepackage{float}
\usepackage{hyperref}
\usepackage[most]{tcolorbox}

\theoremstyle{plain}
\newtheorem{theorem}{Theorem}[section]

\begin{document}
%

%

\twocolumn[

\aistatstitle{Linear Reasoning vs. Proof by Cases: Obstacles for Large Language Models in FOL Problem Solving}

\aistatsauthor{ Yuliang Ji \And Fuchen Shen \And  Jian Wu \And  Qiujie Xie \And Yue Zhang }

\aistatsaddress{ Nanjing University of \\ Science and Technology  \And  Westlake University \And Westlake University \And Zhejiang University \\ Westlake University  \And Westlake University } ]

\begin{abstract}
To comprehensively evaluate the mathematical reasoning capabilities of Large Language Models (LLMs), researchers have introduced abundant mathematical reasoning
datasets. 
However, most existing datasets primarily focus on linear reasoning, neglecting other parts such as proof by contradiction and proof by cases, which are crucial for investigating LLMs' reasoning abilities.
To address this limitation, 
we first introduce a novel first-order logic (FOL) dataset named \textbf{PC-FOL}, 
annotated by professional mathematicians, focusing on case-based reasoning problems. All instances in
this dataset are equipped with a manually written natural language proof
, clearly distinguishing it from conventional linear reasoning datasets.
Our experimental results over leading LLMs demonstrate \textbf{a substantial performance gap between linear reasoning and case-based reasoning problems.} To further investigate this phenomenon, we provide \textbf{a theoretical analysis} grounded in graphical model, which provides an explanation for the observed disparity between the two types of reasoning problems.
We hope this work can 
reveal the core challenges in the field of automated natural language mathematical proof generation, paving the way for future research.
\end{abstract}

\section{Introduction}

By utilizing massive training datasets and computational resources, LLMs have shown potential in solving mathematical problems \citep{ahn-etal-2024-large, kojima2023largelanguagemodelszeroshot}, where 
mathematical logic establishes an essential role in structuring mathematical proofs and reasoning  \citep{principles_of_mathematical_logic_Hilbert_1928}.
The core of mathematical logic involves propositional logic and first-order logic (\textbf{FOL}), where FOL problems need to derive conclusions from given premises, demanding logical reasoning skills and the capability to process complex logical relationships described in natural language \citep{2021antheropics, parmar-etal-2024-logicbench}.

To comprehensively evaluate the mathematical reasoning capabilities of LLMs, scientists have introduced many datasets \citep{Kaliszyk2017HolStep, GSM8K, mitra2024orcamath200k, gsm-symbolic, welleck2021naturalproofs, yang2023leandojo}.
For example, researchers have evaluated the capabilities of their models on the GSM8K \citep{GSM8K} dataset, which contains both computational and word problems, requiring models to possess logical reasoning and calculation abilities to get correct answers. 
However, limited FOL reasoning datasets \citep{MathQA} have been proposed to evaluate the FOL reasoning abilities of LLMs, and each of them has its own shortcomings.
As shown in Table \ref{evaluate_dataset}, RuleTaker \citep{ruletaker2020}, LogicNLI \citep{logicnli2021}, ProntoQA \citep{prontoQA} are datasets that include FOL examples, but they do not provide natural language proofs. 
FOLIO \citep{folio2024} and P-FOLIO \citep{p-folio2024} are among the earliest datasets to propose standard FOL problems based on real-world stories, written in natural language by human annotators. However, FOLIO does not provide corresponding answers, and P-FOLIO includes only step-by-step reasoning chains without natural language explanations. 

More importantly, although existing FOL datasets have made progress in terms of coverage and linguistic diversity, \textbf{they generally lack systematic annotation of proof strategies, particularly whether a problem requires the use of proof by cases}, a widely employed reasoning technique in discrete mathematics \citep{proofofkeplerconjecture2005, JI2019230}. From the perspective of reasoning methodology, natural language FOL problems can be broadly categorized into two types~(Table \ref{table_of_difference_between_linear_and_case}): \textbf{linear-reasoning} and \textbf{proof-by-cases}. Problems in the linear-reasoning category exhibit a single, sequential reasoning path, where conclusions can be derived through straightforward step-by-step inference. In contrast, proof-by-cases problems require partitioning the reasoning process into multiple scenarios, analyzing each case independently, and subsequently integrating the outcomes to determine the final truth value. Thus, it is necessary to evaluate LLMs’ FOL reasoning abilities across these two types of problems, since the proof processes they involve are fundamentally different.

\begin{table}
  \caption{Comparison of BS-FOL with other datasets related to mathematical logic reasoning. ``Standard FOL'' represents whether the data instances are written in formal FOL. ``NL Instance'' represents whether the dataset is written in natural language. ``Reasoning Chains'' shows whether the dataset explains the answer in the form of reasoning chains. ``NL proofs'' represents whether the dataset gives natural language proofs. ``Linear/Case Label'' represents whether the data instance is supported with a label of FOL type.}
  \label{evaluate_dataset}
  \centering
  \begin{adjustbox}{width=\linewidth}
    \begin{tabular}{c|ccccc}
    \toprule
    Dataset Name     &Size & \makecell[l]{Standard \\ FOL} & \makecell[l]{NL \\ Instance} &\makecell[l]{NL \\ proofs} & \makecell[l]{Linear/Case \\ Label} \\
    \midrule
    GSM8K(\citeyear{GSM8K}) & 8.5k & \ding{53} & \checkmark  & \checkmark & N/A \\
    MathQA(\citeyear{MathQA}) & 37k & \ding{53} & \checkmark  & \checkmark & N/A \\
    RuleTaker(\citeyear{ruletaker2020}) & 500k & \checkmark & \checkmark  & \ding{53} & \ding{53} \\
    LogicNLI(\citeyear{logicnli2021}) & 20k & \checkmark & \checkmark  & \ding{53} & \ding{53} \\
    ProntoQA(\citeyear{prontoQA}) & 46k & \checkmark & \checkmark  & \ding{53} & \ding{53} \\
    FOLIO(\citeyear{folio2024})       & 1204 & \checkmark  & \checkmark  & \ding{53} & \ding{53} \\
    P-FOLIO(\citeyear{p-folio2024})   &  1437 & \checkmark & \checkmark    & \ding{53} & \ding{53}\\
    \midrule
    \rowcolor[HTML]{E1EAFF}
    Our PC-FOL         & 2044 & \checkmark & \checkmark & \checkmark & \checkmark \\
    \bottomrule
    \end{tabular}
  \end{adjustbox}
  \vspace{-1.5em}
\end{table}

\begin{table*}[!h]
  \caption{Examples of linear-reasoning problem and proof-by-cases problem. The left side presents an example of linear-reasoning FOL question with a step-by-step proof sketch. The right side presents an example of proof-by-cases FOL question with a proof sketch.}
  \label{table_of_difference_between_linear_and_case}
  \centering
  \small
  \begin{adjustbox}{width=\linewidth}
    \begin{tabular}{c|cc}
    \toprule
    \makecell[l]{FOL \\ Type} & Linear-Reasoning & Proof-by-Cases \\
    \midrule
    Example  & 
    \makecell[l]{
    \textbf{NL Premises}\\
    1.No songs are visuals.\\
    2.All folk songs are songs.\\
    3.All videos are visuals.\\
    4.All movies are videos.\\
    5.All sci-fi movies are movies.\\
    6.Inception is a sci-fi movie.\\
    \textbf{NL Conclusions}\\
    Inception is a folk song.
    } 
    & 
    \makecell[l]{
    \textbf{NL Premises}\\
    1.Any person that is tall does not major in physics. \\
    2.All people who are not tall study quantum computing.\\
    3.All students major in math or physics. \\
    4.If a person majors in math, then the person studies algebraic geometry. \\
    5.If a person majors in math, then the person does not study quantum computing. \\
    6.Billy is a student who studies algebraic geometry. \\
    \textbf{NL Conclusions}\\
    Billy either majors in physics or studies quantum computing.
    } \\
    \midrule
    \makecell[l]{Proof \\ Process} & \includegraphics[height=10cm]{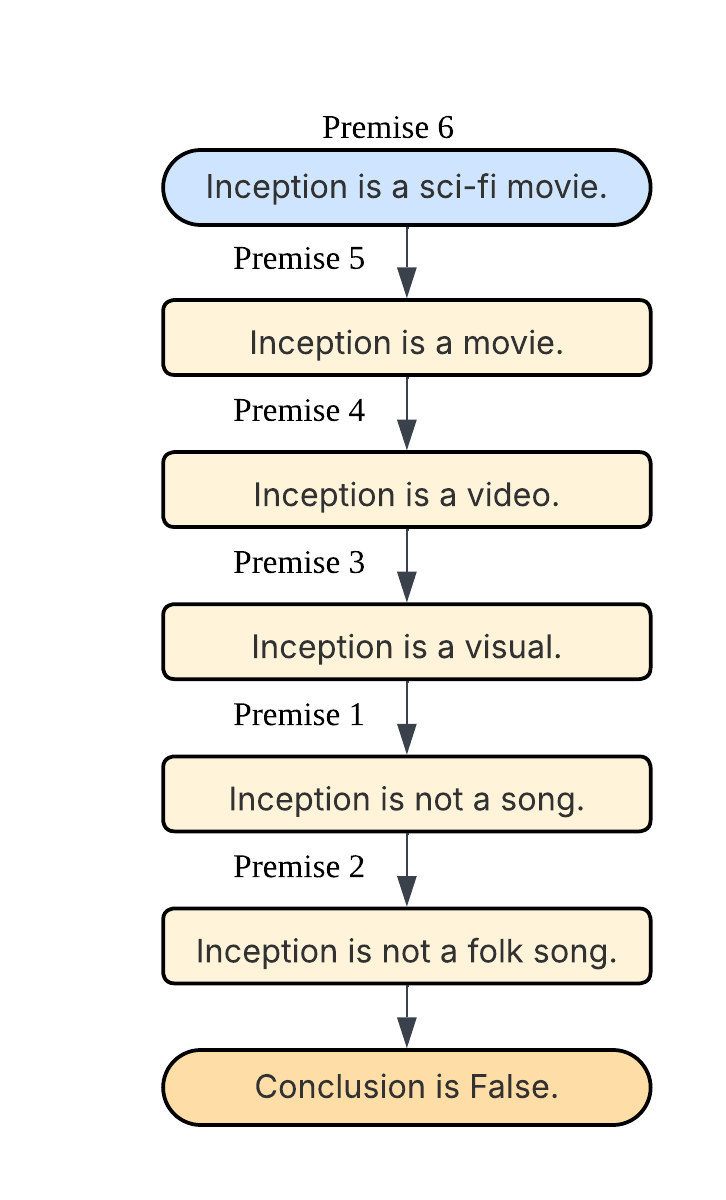}
    & \includegraphics[height=10cm]{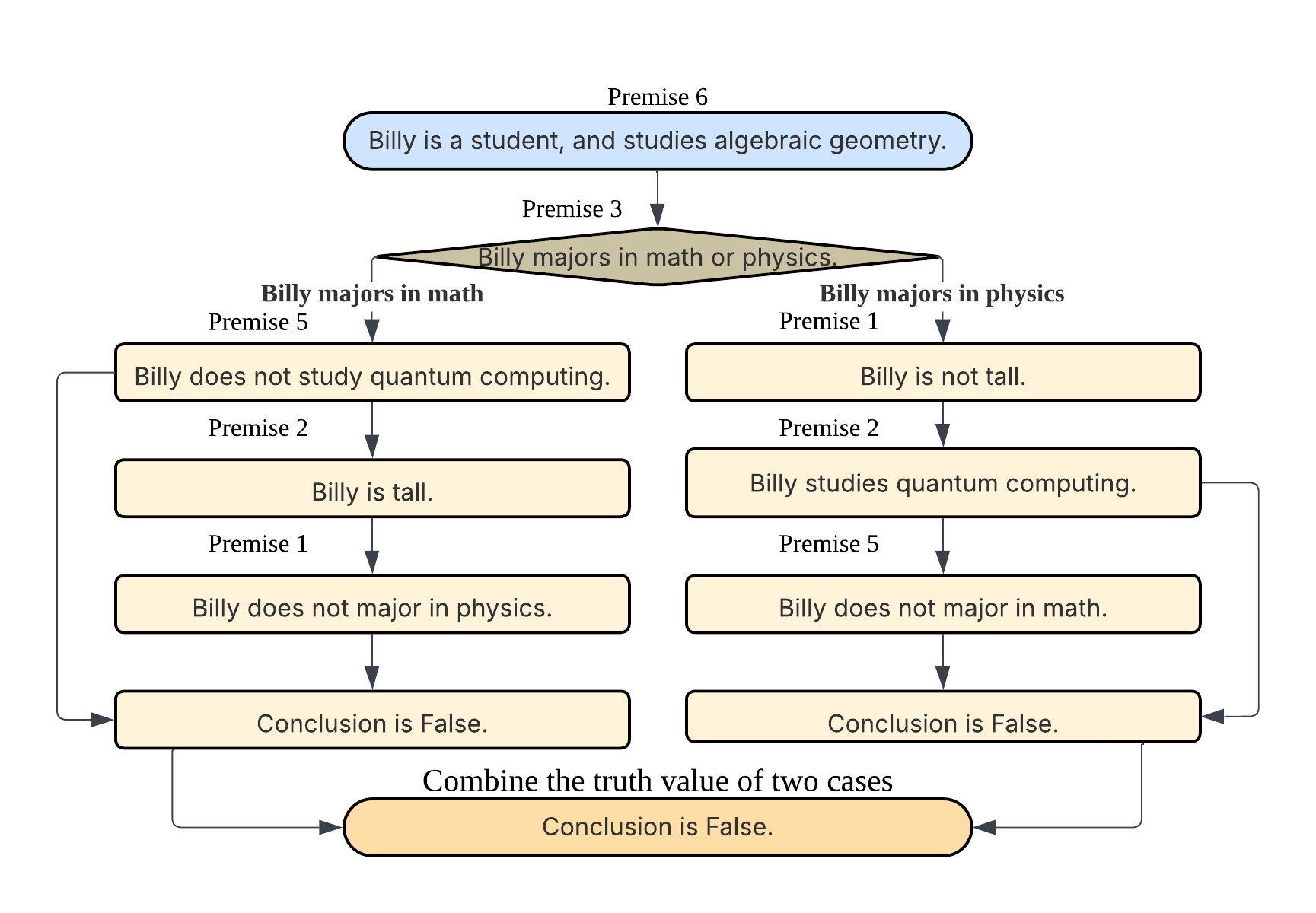}\\
    \bottomrule
    \end{tabular}
  \end{adjustbox}
\end{table*}

To address these challenges, we propose a \textbf{P}roof-by-\textbf{C}ases FOL benchmark (PC-FOL), which is a robust FOL reasoning dataset annotated by professional mathematicians and focuses on FOL problems that need to be solved by proof-by-cases technique. All instances in this dataset are equipped with a manually written natural language proof by our annotators. 
Note that we do not list proof-by-contradiction as an independent type, because all such proofs can be expressed in a standard process in the form of proof by cases. The details are shown in Appendix \ref{appendix_detail_proof_by_contradiction}.
To further assess the reasoning abilities of LLMs, inspired by CofQA \citep{wu2024cofcastepwisecounterfactualmultihop}, we apply \textbf{lexical substitution} by replacing certain nouns with random combinations of English alphabets in our instances. Therefore, LLMs cannot rely on their memories and must answer the problem based on the given premises.

Based on PC-FOL, we evaluate and report the reasoning ability of several leading LLMs~(Section \ref{section_experiments}). Experimental results show that \textbf{there is a huge gap between the performance of LLMs on linear-reasoning problems and proof-by-cases problems.} For example, GPT-4o performs well on our linear reasoning instances with 85$\%$ accuracy, but only gets a 51$\%$ accuracy on our proof-by-cases instances. 
Furthermore, we provide theoretical analysis~(Section \ref{section_theoretical_analysis}) by using the graphical model to explain why such a significant performance gap exists between the two types of reasoning problems.

\textbf{Our contributions are the following parts:} (1) \textbf{We manually curate a FOL dataset named PC-FOL}, which contains expert-annotated instances under two types of FOL questions for the first time. (2) \textbf{We benchmark the performance of the FOL reasoning task} for several LLMs by prompting them with zero-shot or few-shot examples, and find a substantial performance gap between the two types of FOL problems. (3) \textbf{We provide a plausible theory} to explain the reason why there exists such a substantial performance gap.

\section{Related Work}

\subsection{Dataset for FOL reasoning}

Compared with abundant mathematical reasoning datasets, the number of datasets specifically designed for FOL reasoning is limited. 
RuleTaker \citep{ruletaker2020} systematically evaluates a model's multi-step reasoning capabilities using data automatically generated from facts and rules.
LogicNLI \citep{logicnli2021} is a benchmark following basic principles of FOL to diagnose LLMs’ FOL reasoning ability.
ProntoQA \citep{prontoQA} creates the dataset through a four-stage process: ontology generation, proof construction, natural language conversion, and answer annotation. It is specifically designed to assess model performance in complex logical reasoning and planning tasks.
FOLIO \citep{folio2024} is the first reasoning dataset to combine natural language with FOL annotations. Based on FOLIO, P-FOLIO \citep{p-folio2024} gives the answers as step-by-step reasoning chains in the form of designed inference rules.

\subsection{LLM Natural Language Reasoning over Mathematical Problems}
Recent research has introduced various methods to enhance the natural language reasoning capabilities of LLMs in solving mathematical problems, while our work focuses on the obstacles that LLMs encounter in logical reasoning tasks. 
Here, we highlight several significant works that employ diverse approaches, including: (1) \textbf{Chain-of-Thought (CoT) Reasoning}, which significantly improves the ability of LLMs on complex problems. This strategy is employed by leading models such as OpenAI-O1 \citep{openai2024o1card} and DeepSeek-R1 \citep{deepseekai2025deepseekr1incentivizingreasoningcapability}.
(2) \textbf{Symbolic Reasoning.} 
MathCoder \citep{wang2024mathcoder} improves the performance by enabling LLMs to use code for modeling and deriving math equations.
AlphaGeometry \citep{AlphaGeometry2024} employs LLMs to translate natural language math problems into formal proof languages, demonstrating impressive generalization capabilities. Other models such as MathBERT \citep{peng2021mathbertpretrainedmodelmathematical} and MathGLM \citep{yang2023mathglm} optimize symbolic understanding by embedding mathematical formulas during pretraining, achieving outstanding results on algebraic and arithmetic tasks. 
(3) \textbf{Prompt Engineering}
The Thought Propagation method \citep{yu2023thoughtpropagation} explores designing a special prompt template, dynamically selecting the optimal path to enhance the flexibility and effectiveness of solving mathematical problems.
Active Prompting method \citep{diao-etal-2024-active} focuses on selecting high-information examples to optimize model responses.

\section{Preliminaries}
\label{section3.1_background}

\subsection{FOL Natural-Language Problem}

Propositional logic and FOL are two topics in the mathematical logic field, focusing on the study of formalized structures of reasoning. Propositional logic establishes reasoning relationships between atomic propositions using logical connectives such as implication ($\rightarrow$), conjunction ($\land$), disjunction ($\lor$), and negation ($\lnot$). FOL extends this framework by introducing quantifiers ($\forall$, $\exists$) and predicates, enabling the formal representation of individual objects and their attributes. 

Most widely used mathematical datasets, particularly FOL datasets, focus on step-by-step proofs. However, a substantial class of mathematical problems involves statements containing logical disjunctions (``or,'' $\lor$) or exclusive disjunctions (``either/or,'' $XOR$).
Such problems cannot be solved through straightforward sequential reasoning, since it is necessary to consider the distinct cases introduced by the disjunctive or exclusive disjunctive terms and determine whether the conclusion holds under each scenario.

Hence, in this work, we categorize natural language FOL problems into two types: \textbf{linear-reasoning} and \textbf{proof-by-cases}. 
The left side of Table \ref{table_of_difference_between_linear_and_case} presents an example of a linear-reasoning FOL question. 
The step-by-step answer for this question is given as follows: Based on the given premises, premise 6 establishes that ``Inception'' is a sci-fi movie. Subsequently, from premise 5, we know it is a movie; from premise 4, it is a video; and from premise 3, it is a visual. According to premise 1, ``Inception'' is not a song, and premise 2 states it is not a folk song. As a result, the conclusion ``Inception is a folk song'' is false.
The right side of Table \ref{table_of_difference_between_linear_and_case} provides an example of a proof-by-cases FOL question. To determine the truth value of the conclusion, one must discuss the two possible cases given by premise 3 separately. The reasoning proceeds as follows: In the left case, Billy majors in math, then Billy neither majors in physics (premises 5, 1, and 2) nor studies quantum computing (premise 5); In the right case, Billy majors in physics, then Billy must study quantum computing (premises 1, 2); Thus, in both cases, the conclusion is false; As a result, we can conclude that the truth value of the conclusion is false. 

Based on the above explanation, we can see that the two proof types are completely different. Therefore, it is essential to additionally evaluate the LLMs' reasoning abilities on proof-by-cases FOL problems and discover the performance gap of LLMs between the two types of FOL questions. 

\begin{table}[!t]
\caption{Basic statistics of PC-FOL.
$\#$Linear-Reasoning represents the number of linear-reasoning type instances, $\#$Proof-by-Cases represents the number of the proof-by-cases type instances.
  }
  \label{CMN_dataset_two_types}
  \centering
  \small
  \begin{tabular}{c|ccc}
    \toprule
    Dataset Name     & \makecell[l]{$\#$Linear\\Reasoning}  &  \makecell[l]{$\#$Proof-by\\-Cases} & $\#$Total \\
    \midrule
    PC-FOL & 511 & 511 & 1022\\
    PC-FOL-Replace & 511 & 511 & 1022\\
    \midrule
    PC-FOL Total & 1022 & 1022 & 2044 \\
    \bottomrule
  \end{tabular}
  \vspace{-1.0em}
\end{table}

\subsection{Graphical Model}

\begin{figure}[!h]
  \centering
  \includegraphics[width=\linewidth]{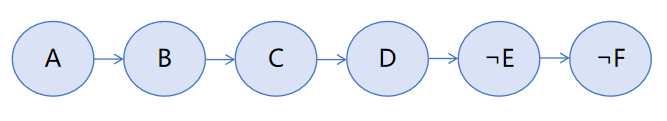}
  \caption{Abstracted reasoning chain for the left side example of Table \ref{table_of_difference_between_linear_and_case}.}
  \label{figure_graphical_model_linear_reasoning}
\end{figure}

When solving FOL problems in mathematical logic assignments, it is possible to abstract the logical reasoning process into a graphical representation, as shown in Figure \ref{figure_graphical_model_linear_reasoning}. In this graph, each property of “Inception” in the step-by-step proofs (corresponding to the left side of Table \ref{table_of_difference_between_linear_and_case}) is represented as a binary variable that takes the value true or false. For instance, let A denote ``Inception is a sci-fi movie,'' B denote ``Inception is a movie,'' and F denote ``Inception is a folk song.'' Arrows in the graph indicate the direction of logical inference.
Such representation is similar to the probabilistic graphical model. The probabilistic graphical model uses a graph to express the conditional dependencies among random variables. Mathematically, in a graphical model, if the events are $X_1, \ldots, X_n$, then their joint probability satisfies $P[X_{1},\ldots ,X_{n}]=\prod _{i=1}^{n}P[X_{i}|{\text{pa}}(X_{i})]$ where $\text{pa}(X_{i})$ is the set of parents of node $X_i$.


\section{The PC-FOL Dataset}

In this section, we detail the properties of our PC-FOL dataset, and present key statistics of our dataset in comparison to existing FOL datasets. The data collection process is outlined in Appendix \ref{data_collection_process}.

\textbf{Dataset Structure}
Our PC-FOL dataset contains 2044 instances categorized into two types: Linear-Reasoning type questions and Proof-by-Cases type questions. For each instance, natural language answers are carefully provided by by professional annotators. 
Furthermore, we apply the lexical substitution techniques by replacing the nouns with various nonsensical combinations of alphabetic characters on the PC-FOL dataset to construct the PC-FOL-Replace dataset.


\textbf{Number of Instances}
The basic statistics of PC-FOL are shown in Table \ref{CMN_dataset_two_types}. Since our dataset focuses on the proof-by-case type FOL reasoning problems, we balance the number of instances of the two types. Therefore, for each sub-dataset, our annotators collected 511 instances,
resulting in 1022 linear-reasoning instances and 1022 proof-by-case instances. 

\begin{table}[!t]
  \caption{Statistical comparison of PC-FOL with other datasets related to mathematical logic or FOL. $\#$Stories represents the number of distinct premise sets aligned to the instances. 
  $\#$Vocab represents the number of distinct words in the dataset. 
  }
  \label{CMN_dataset_statistics}
  \centering
  \begin{adjustbox}{width=\linewidth}
  \begin{tabular}{c|cccc}
    \toprule
    Dataset Name     & $\#$Stories  & $\#$Premises & $\#$Instances & $\#$Vocab \\
    \midrule
    RuleTaker(\citeyear{ruletaker2020}) & 456 & 11M & 500k & 65\\
    LogicNLI(\citeyear{logicnli2021}) & 2000 &  480k  & 20k & 1091\\
    FOLIO(\citeyear{folio2024})       &  413 &  6398 & 1204 & 4119 \\
    P-FOLIO(\citeyear{p-folio2024})   &  487 &  7620 & 1437 & 4658 \\
    \midrule
    PC-FOL & 291 & 5836 & 1022  & 3373 \\
    PC-FOL-Replace & 291 & 5870 & 1022 & 11293\\
    \midrule
    PC-FOL Total & 582 & 11706 & 2044 & 14666\\
    \bottomrule
  \end{tabular}
  \end{adjustbox}
   \vspace{-1.5em}
\end{table}

\begin{table*}[!h]
  \caption{Logical reasoning accuracy results (in $\%$) of zero-shot and few-shot prompting on PC-FOL dataset. ``Acc. Gap'' represents the performance gap between the Linear-Reasoning questions and Proof-by-Cases questions.}
  \label{table_few_shot_nl_reasoning_eng}
  \centering
  \small
  \begin{adjustbox}{width=\linewidth}
  \begin{tabular}{c|cc>{\columncolor{gray!20}}c|cc>{\columncolor{gray!20}}c}
    \toprule
    Model     & \makecell[l]{PC-FOL \\ Linear-Reasoning}  & \makecell[l]{PC-FOL \\ Proof-by-Cases} &\makecell[l]{PC-FOL \\ Acc. Gap} & \makecell[l]{PC-FOL-Replace \\ Linear-Reasoning} & \makecell[l]{PC-FOL-Replace \\ Proof-by-Cases} & \makecell[l]{PC-FOL-Replace \\ Acc. Gap}\\
    \midrule
    GPT-4o \textsubscript{\textbf{0-shot}} & 85.13 & 51.08 & \textbf{34.05}{\small ↓} & 84.34 & 51.66 & \textbf{32.68}{\small ↓}\\
    GPT-4o \textsubscript{\textbf{3-shot}} & 83.17 & 55.58 & \textbf{27.59}{\small ↓} & 82.39 & 52.64 & \textbf{29.75}{\small ↓}\\
    \midrule
    GPT-4.1 \textsubscript{\textbf{0-shot}} & 89.24 & 69.86 & \textbf{19.38}{\small ↓} & 88.45 & 63.80 & \textbf{24.65}{\small ↓}\\
    GPT-4.1 \textsubscript{\textbf{3-shot}} & 86.30 & 71.23 & \textbf{15.07}{\small ↓} & 84.34 & 68.69 & \textbf{15.65}{\small ↓}\\
    \midrule
    o4-mini \textsubscript{\textbf{0-shot}} & 90.80 & 79.84 & \textbf{10.96}{\small ↓} & 88.26 & 77.89 & \textbf{10.37}{\small ↓}\\
    o4-mini \textsubscript{\textbf{3-shot}} & 90.02 & 80.43 & \textbf{9.59}{\small ↓} & 88.45 & 79.45 & \textbf{9.00}{\small ↓}\\
    \midrule
    Llama 3-70B \textsubscript{\textbf{0-shot}} & 80.63 & 46.77 & \textbf{33.86}{\small ↓} & 79.84 & 50.88 & \textbf{28.96}{\small ↓} \\
    Llama 3-70B \textsubscript{\textbf{3-shot}} & 83.56 & 54.21 & \textbf{29.35}{\small ↓} & 80.63 & 53.62 & \textbf{27.01}{\small ↓} \\
    \midrule
    Deepseek-V3 \textsubscript{\textbf{0-shot}} & 89.63 & 76.71 & \textbf{12.92}{\small ↓} & 86.69 & 73.39 & \textbf{13.30}{\small ↓} \\
    Deepseek-V3 \textsubscript{\textbf{3-shot}} & 90.02 & 77.10 & \textbf{12.92}{\small ↓} & 85.71 & 72.02 & \textbf{13.69}{\small ↓} \\
    \midrule
    Qwen3-14B \textsubscript{\textbf{0-shot}} & 87.28 & 63.60 & \textbf{23.68}{\small ↓} & 84.54 & 59.10 & \textbf{25.44}{\small ↓} \\
    Qwen3-14B \textsubscript{\textbf{3-shot}} & 89.04 & 66.34 & \textbf{22.70}{\small ↓} & 86.89 & 64.97 & \textbf{21.92}{\small ↓} \\
    \bottomrule
  \end{tabular}
  \end{adjustbox}
\end{table*}

\textbf{Stories and Premises} 
Table \ref{CMN_dataset_statistics} shows that our dataset contains 2044 instances and 11706 premises. Using the same definition in FOLIO, we call the premise sets in an instance a story. By this definition, our PC-FOL dataset contains 291 distinct stories.

\textbf{Vocabulary}
Our PC-FOL dataset contains a vocabulary of 3373 unique words. By replacing noun words with combinations of random English alphabets and making minor modifications to the instances in PC-FOL, we construct the PC-FOL-Replace dataset, significantly increasing the vocabulary size to 11293. 





\section{Experiments}\label{section_experiments}
We conduct several experiments based on the proposed PC-FOL dataset, aiming to answer the following research questions:
1) Is there a performance gap for LLMs between the two types of FOL problems? 2) Are the proofs generated by LLMs correct? 

\subsection{Experiment Setting}\label{section_proof_evaluation}
\label{section_task}

\textbf{Experimental Tasks} Based on PC-FOL dataset, we conduct two types of experiments: Natural Language FOL Reasoning and Proof Evaluation. 
The Natural Language FOL Reasoning task aims to evaluate the reasoning capabilities of selected LLMs, where the LLMs are given the premises and are asked to give the truth value of the conclusion from the following three choices: ``True'', ``False'', and ``Unknown''. 
The prompt templates used in this task are provided in Appendix \ref{prompt_of_experiments}.
The Proof Evaluation task is designed to evaluate the correctness of the proofs generated by LLMs. For each instance, the premises are provided, and the LLMs are tasked with generating natural language proofs along with the corresponding truth value of the conclusion. We also conduct additional fine-tuning experiments on the PC-FOL dataset, and the corresponding details are presented in Appendix \ref{appendidx_fine_tuning_results}.

\textbf{Metrics} To evaluate the performance of the LLMs, we use Accuracy as the metric to evaluate the generated truth values. Also, following the P-FOLIO \citep{p-folio2024} paper, we try two metrics to evaluate the generated proofs: ROUGE \citep{lin-2004-rouge} and pass$@$k \citep{Chen2021EvaluatingLLpassk}. 
The \textbf{Accuracy} metric reports the percentage of correct truth value (True/False/Unknown) generated by the tested LLMs. 
The \textbf{ROUGE} metrics including ROUGE-1, ROUGE-2, and ROUGE-L \citep{lin-2004-rouge}, are used to compare the model-generated proofs with human-written proofs. 
The \textbf{pass$@$k} metric is defined as the same in P-FOLIO \citep{p-folio2024}: After sampling $k$ proofs from the tested LLM, the pass$@$k represents the percentage of instances in which at least one generated proof follows the same reasoning process as the annotated proof. The verification process is automatically checked by GPT-4o, and the prompt templates are provided in Appendix \ref{prompt_of_experiments}.


\textbf{Models} We employ both proprietary LLMs and open-source LLMs in experiments. The proprietary LLMs include GPT-4o \citep{openai2024gpt4ocard}, GPT-4.1 \citep{openai2024gpt4-1}, o4-mini \citep{openai2025o4mini}. The open-source LLMs tested in our experiments are Llama-3.3-70B \citep{grattafiori2024llama3herdmodels}, deepseek \citep{deepseekai2025deepseekv3technicalreport} and Qwen3 \citep{yang2025qwen3technicalreport}. More details are deferred to Appendix \ref{appendix_detail_of_experiment}.

\begin{table*}[!ht]
  \caption{Result of ROUGE metrics for few-shot prompting with GPT-4o on PC-FOL dataset.}
  \label{rouge_metric_many_shot_reasoning}
  \centering
  \small
  \begin{tabular}{c|cccccccc}
    \toprule
    $\#$Shot     & \makecell[l]{Linear \\ R-1} & \makecell[l]{Linear\\ R-2} & \makecell[l]{Linear \\ R-L} & \makecell[l]{Linear \\ Acc} & \makecell[l]{Case \\ R-1} & \makecell[l]{Case \\ R-2} & \makecell[l]{Case \\ R-L} & \makecell[l]{Case \\ Acc} \\
    \midrule
    0-shot & 43.56  & 29.24  & 31.67 & 85.13 & 59.05 & 36.73 & 34.37 & 51.66\\
    5-shot & 54.88 & 37.60 & 42.42 & 85.77 & 62.00 & 39.99 & 37.97 & 53.95 \\
    10-shot & \textbf{55.98} & 38.50 & \textbf{43.23} & 86.36 & 62.16 & 40.36 & 38.57 & 55.73\\
    20-shot & 55.77 & 38.58 & 43.02 & 87.15 & 62.86 & 40.96 & 39.27 & 55.53 \\
    40-shot & 55.96 & \textbf{38.74} & 43.11 & \textbf{87.94} & \textbf{63.38} & \textbf{41.20} & \textbf{39.76} & \textbf{57.91}\\
    \bottomrule
  \end{tabular}
  \vspace{-1em}
\end{table*}

\subsection{Natural language FOL reasoning}\label{section_5.1_nl_FOL_reasoning}

We present the results of evaluating the natural language FOL reasoning capabilities of various LLMs based on PC-FOL dataset in Tables \ref{table_few_shot_nl_reasoning_eng}. More experimental results can be found in Appendix \ref{Appendix_more_experiments}.

Based on the results, we can observe that \textbf{there are significant performance gaps between the linear-reasoning FOL problems and the proof-by-cases FOL problems across all experiments.} For the GPT-4o model, while it achieves 85.13$\%$ accuracy on the 0-shot task for the PC-FOL linear-reasoning dataset, its accuracy on the proof-by-cases dataset is 34.05$\%$ lower. A similar trend is observed in the 3-shot task, where the accuracy for the linear-reasoning problems is 83.17$\%$, compared to 55.58$\%$ for the proof-by-cases problems.
The DeepSeek-v3 model performs best overall but still exhibits a 12.92$\%$ gap, with 89.63$\%$ accuracy on linear-reasoning problems and 76.71$\%$ on proof-by-cases problems.
On the PC-FOL-Replace dataset, we also observe a 1-5$\%$ accuracy gap compared to the PC-FOL dataset for each tested LLM. 

\subsection{Proof Evaluation}\label{section_5_3_proof_evaluation}

\textbf{Pass@k Evaluation}

\begin{table}[!h]
  \caption{Pass$@$k results for GPT-4o model on the PC-FOL dataset.}
  \label{passk_gpt4o_reasoning}
  \centering
  \small
  \begin{tabular}{c|c|cc}
    \toprule
    Model  & \makecell k   & \makecell[l]{PC-FOL \\ Linear-Reasoning} & \makecell[l]{PC-FOL \\ Proof-by-Cases} \\ 
    \midrule
    \multirow{5}{*}{GPT-4o}
        & 1 & 82.58 & 46.38\\
        & 2 & 96.67 & 71.82\\
        & 3 & 99.41 & 85.13\\
        & 4 & 100.0 & 92.95\\
        & 5 & 100.0 & 95.50\\
    \bottomrule
  \end{tabular}
\end{table}

Table \ref{passk_gpt4o_reasoning} presents the Pass@k results for GPT-4o on the PC-FOL dataset. The Pass@k metric represents the percentage of instances where at least one of $k$ sampled proofs is deemed to match the expert-written proof by the evaluation LLM model. \\
The results indicate that \textbf{as $k$ increases, the Pass@k metric improves significantly.}
For example, on the linear-reasoning type problems, the Pass@k score reaches 100$\%$ when $k=4$, indicating that the evaluation model believes there is at least one correct proof for every instance. For proof-by-case type problems, when $k=1$, only 46.38$\%$ of instances are considered to get a correct answer from GPT-4o, and this percentage rises to 95.50$\%$ when $k=5$, showcasing a substantial metric increase.

\textbf{Few-shot prompting}
Table~\ref{rouge_metric_many_shot_reasoning} indicates that increasing the number of proof examples leads to improved performance. Specifically, the ROUGE scores improve by approximately 10$\%$ for linear reasoning problems and about 5$\%$ for case-based reasoning problems when the number of shots increases from 0 to 40. Besides, label accuracy shows a relative improvement of 5$\%$ in the 40-shot setting compared to the 0-shot baseline.

\subsection{Manually Checking Proofs}

In Section \ref{section_5.1_nl_FOL_reasoning}, we observe that \textbf{LLMs exhibit low accuracy when solving proof-by-cases type FOL problems.} Consequently, employing another LLM to evaluate the correctness of the answers (e.g., the Pass@k metric in Section \ref{section_5_3_proof_evaluation}), would likely introduce significant errors. To mitigate this issue, we further conduct a manual experiment to evaluate the answers generated by the GPT-4o model. Specifically, 
in this experiment, we utilize GPT-4o (web interactive version) to generate a proof with a corresponding label for each problem in our PC-FOL dataset. These proofs are then examined by a professional mathematician, who categorized each instance into one of three categories: Wrong Label with Wrong Proof, Correct Label with Wrong Proof, and Correct Label with Correct Proof. The distribution of results across these categories is reported in Table \ref{manully_check_result}. 

\begin{table}[!h]
  \caption{Proportions of the three answer categories (round to two decimal places) of GPT-4o (web interactive) on our PC-FOL dataset.}
  \label{manully_check_result}
  \centering
  \small
  \begin{tabular}{c|cc}
    \toprule
    Category    & \makecell[l]{PC-FOL \\ Linear-Reasoning} & \makecell[l]{PC-FOL \\ Proof-by-Cases} \\ 
    \midrule
      \makecell[l]{Wrong Label \\ Wrong Proof}  & 15.07$\%$ & 45.79$\%$ \\
    \midrule
       \makecell[l]{Correct Label \\ Wrong Proof} & 4.31$\%$ & 28.18$\%$ \\
    \midrule
       \makecell[l]{Correct Label \\ Correct Proof} & 80.63$\%$ & 26.03$\%$ \\
    \bottomrule
  \end{tabular}
  \vspace{-1em}
\end{table}

We can now answer the research question ``(2): Are the proofs generated by LLMs correct?'' based on the results in Table \ref{manully_check_result}, which indicates that for linear-reasoning problems, \textbf{when the model assigns a correct label, the corresponding proof is also likely to be correct.} However, for proof-by-cases problems, not only is the label accuracy only \textbf{54.21$\%$}, but \textbf{fewer than half} of these samples with the correct label actually get a correct proof. Through our manual checking of the proofs, we identify \textbf{four main reasons for the proof errors}: misapplication of premises, misinterpretation of disjunctive statements, mistakes in inductive and deductive reasoning, and semantic misunderstandings. In particular, for proof-by-cases problems, the predominant cause of incorrect proof is the \textbf{misapplication of disjunctive statements}, where the tested LLM often misunderstands exclusive disjunctions, or fails to provide proofs that address separate scenarios.



\subsection{Main Findings}

Below we summarize our main findings, which are the answers of the research question (1).
 
\textbf{For all tested LLMs, there is a huge performance gap between Linear-Reasoning type instances and Proof-by-Cases type instances.} The average accuracy performance gap between the two types of questions for different LLMs, as shown in Tables \ref{table_few_shot_nl_reasoning_eng}, is calculated as follows: GPT-4o (31.02$\%$), GPT-4.1 (18.69$\%$), o3-mini (9.98$\%$), Llama-3 (29.80$\%$), Deepseek-V3 (13.21$\%$), Qwen3-14B (23.44$\%$). Through manual evaluation and considering the distribution of the number of premises (shown in Appendix \ref{distribution_of_premises}), we found no evidence to suggest that this performance gap is related to the number of premises. 
Instead, the performance gap is attributed to the fundamental problem-solving methods required for linear-reasoning problems and proof-by-cases problems.

\textbf{Lexical substitution has only a marginal effect on model performance.} The average performance difference between the PC-FOL and PC-FOL-Replace for different LLMs is calculated as follows: GPT-4o (0.98$\%$), GPT-4.1 (2.84$\%$), o3-mini (1.76$\%$), Llama-3 (0.05$\%$), Deepseek-V3 (3.91$\%$), Qwen3-14B (2.69$\%$). The only notable exception appears in Table \ref{table_few_shot_nl_reasoning_eng}, where Llama-3 shows an abnormal accuracy gap on the 0-shot proof-by-cases task. In general, the performance degradation caused by lexical substitution can be attributed to the inability of LLMs to correctly recognize the substituted nouns. This difficulty arises because nouns in the PC-FOL-Replace dataset are randomly generated alphabetic strings, which are unlikely to have appeared in the training corpora of any LLM.


\section{Theoretical Analysis}
\label{section_theoretical_analysis}
To explain the substantial performance gap exhibited between linear-reasoning problems and proof-by-cases problems, we employ the \textbf{probabilistic graphical model} to discuss theoretical approaches for analyzing the performance of LLMs on mathematical logic tasks, propose specific theoretical assumptions, and introduce a variant of the probabilistic graphical model tailored to mathematical logic problems.

\subsection{Reasoning Ability of LLMs for Linear-Reasoning problem}

On the one hand, directly using the graphical model (described in Section~\ref{section3.1_background}) can hardly define the linear reasoning ability of LLM. On the other hand, traditional analysis of neural-based NLP models typically defines the probability distribution as $P[X_{1},\ldots,X_{n}]=\prod _{i=1}^{n}P[X_{i}|Context, X_1, \ldots, X_{i-1}]$, where $X_i$ represents the $i$-th token of the output sequence, 
which offers limited explanatory power regarding the mathematical principles or underlying mechanisms of black-box neural models. To address this limitation, we propose a variant graphical model constructed upon the following assumptions and definitions:
\begin{enumerate}
    \item Define each $X_i$ as a proposition described in the given premises.
    \item The conditional probability $P[X_{i}|Context, X_1, \ldots, X_{i-1}]$ can be viewed as selecting the next proposition based on an augmented context.
    \item Fixed the LLM as a function $F$. Define $p_F(Augmented \ Context)$ as the correctness percentage of selecting the correct proposition from context. The values for different augmented contexts are assumed to be the same, defined as $p_F$.
\end{enumerate}

Therefore, the generated proof with the process $X = \{X_1, \ldots, X_n\}$ has the form of $P[X_{1},\ldots,X_{n}]=\prod _{i=1}^{n}P[X_{i}|Augment \ Context]$, and the probability of generating the correct reasoning chain is $p_F^k$, if there are $k$ steps in the corrected proof.

Based on the previous assumptions, we can prove the following theorem:
\begin{theorem}\label{theorem_6_1}
    Under previous assumptions, for a LLM $F$, a linear-reasoning dataset $D$ with the distribution $P_{D}$ of reasoning steps, the probability that the LLM $F$ can give a correct proof is $\sum\limits_{k=1}^{\infty} p_F^k \cdot P_D(|X|=k)$, or $E_{X \sim P_D}[p_F^{|X|}].$
\end{theorem}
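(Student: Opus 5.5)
The plan is to condition on the length of the required proof and apply the law of total probability. Fix the LLM $F$ and draw an instance from the linear-reasoning dataset $D$. Let $X=\{X_1,\ldots,X_{|X|}\}$ be the corrected (annotated) proof chain of that instance, so that $|X|$ is a random variable on $\{1,2,\ldots\}$ with distribution $P_D(|X|=k)$. Let $C$ be the event that $F$ produces a correct proof. I will first compute $P[C \mid |X|=k]$ and then average over $k$.

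For the conditional step, I use the factorization from the variant graphical model: $P[X_1,\ldots,X_k]=\prod_{i=1}^{k}P[X_i\mid \text{Augmented Context}]$. A linear-reasoning proof is a single sequential chain, so it is correct exactly when each of the $k$ selections picks the correct next proposition. There are no branches to reconcile and no alternative routes that we count. By assumption 3, each selection succeeds with the same probability $p_F$, regardless of the augmented context. The product therefore gives $P[C\mid |X|=k]=p_F^k$, which matches the remark preceding the theorem.

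The total probability formula over the disjoint events $\{|X|=k\}_{k\ge1}$ then gives $P[C]=\sum_{k=1}^{\infty}p_F^k\,P_D(|X|=k)$. Since $g(k)=p_F^k$ is a deterministic function of $|X|$, this sum is by definition $E_{X\sim P_D}[p_F^{|X|}]$. I also need the series to be well defined. Because $0\le p_F\le1$, every term is nonnegative and bounded by $P_D(|X|=k)$, so the series converges absolutely, with value at most $1$, by comparison with $\sum_k P_D(|X|=k)=1$. This also justifies writing it as an expectation and allows any rearrangement.

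The step I expect to be the real obstacle is justifying $P[C\mid|X|=k]=p_F^k$. It relies on two modeling choices: the context-independence of $p_F$, which makes the step successes effectively independent with equal probability, and the identification of a correct proof with the exact annotated chain. I would handle this by stating explicitly that ``correct proof'' means reproducing the annotated chain step by step, and that assumption 3 is what makes the factors identical. Once that is fixed, the rest is routine conditioning.
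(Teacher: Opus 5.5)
Your proposal is correct and follows the paper's argument. The paper likewise gets $p_F^{n}$ for a single instance from the chain-rule factorization under assumptions 2 and 3, then averages over the finite dataset grouped by step count, so its weights $N_k/|D|$ play exactly the role of your law-of-total-probability weights $P_D(|X|=k)$; your convergence check and your explicit identification of a correct proof with the annotated chain are harmless additions the paper leaves implicit.
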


The justification for the assumptions, together with the complete proof of Theorem~\ref{theorem_6_1}, are provided in Appendix~\ref{appendix_proof}.
As a result, by using the distribution of the proof steps in the linear-reasoning part (Appendix \ref{distribution_of_proof_steps}), we can then calculate the probability $p_F$, the accuracy for getting the next step for different LLM $F$. For example, since Table \ref{manully_check_result} shows that the correct proof rate for GPT-4o over our dataset is 80.63$\%$, by using Theorem \ref{theorem_6_1}, we can calculate that $p_{GPT-4o}=92.62\%$.

\subsection{Reasoning Ability of LLMs for Proof-by-Cases problem}

We have shown the abstract reasoning chain for a linear-reasoning example in Figure \ref{figure_graphical_model_linear_reasoning}. However, for proof-by-cases questions, the abstract reasoning chain is completely different. The upper part of Figure \ref{figure_graphical_model_proof_by_case} shows an abstracted reasoning chain for the right side example of Table \ref{table_of_difference_between_linear_and_case}, and the lower part shows an abstracted reasoning chain for an example that exists a contradiction in one case. The original description of the lower part example is deferred to Appendix \ref{appendix_proof_proofbycases}.

\begin{figure}[t]
  \centering
  \includegraphics[width=.45\textwidth]{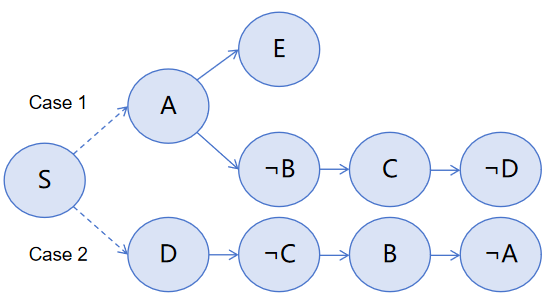}
  \hspace{1cm}
  \includegraphics[width=.45\textwidth]{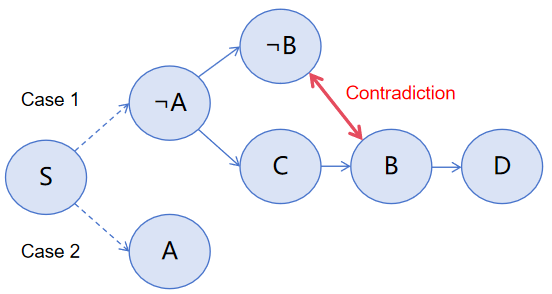}
  \caption{\textbf{Upper}: The abstracted reasoning chain for right side example of Table \ref{table_of_difference_between_linear_and_case}. \textbf{Lower}: The abstracted reasoning chain for a typical example that existing a contradiction in a subcase. The dotted line represents a possible case inferred from a certain premise and the previous steps, and the red double-headed arrow denotes the existence of a contradiction.}
  \label{figure_graphical_model_proof_by_case}
  \vspace{-1em}
\end{figure}

In mathematical logic problem-solving, a standard proof for a proof-by-cases problem must comprise two parts: 1) Complete logical chains for all cases where a contradiction arises, demonstrating that these cases are invalidated by the contradiction; and 2) for all cases that satisfy the given requisite (e.g., ``If A holds, then xxxxxx.''), a corresponding logical deduction that verifies the validity of the conclusion.

Thus, the proof of proof-by-cases type problems should be defined as a set of reasoning chain sets. Mathematically, suppose the set $E$ contains the cases that are possible to exist based on the given premises, and the set $NE$ contains the cases that can not exist (there is a contradiction in them) based on the given premises. Assuming that, for the $i$-th possible case, the reasoning chain for the given question is represented as $\{X^E_{i,1}, \cdots, X^E_{i,n_i}\}$, and for the $j$-th impossible case, the reasoning chain for the given question is represented as $\{X^{NE}_{j,1}, \cdots, X^{NE}_{j,n_j}\}$. Under these definitions, the proof should be the set $\{\{X^{NE}_{j,1}, \cdots, X^{NE}_{j,n_j}\} \forall case \ j\in NE, \{X^{E}_{k,1}, \cdots, X^{E}_{k,n_i}\} for \ selected \  case \ k\in E\}$, where the $k$ means that the $k$-th case of the set $E$ satisfies the given requisite of the question.

Therefore, based on the previous assumptions, we can prove the following theorem:
\begin{theorem}\label{theorem_6_2}
    Under previous assumptions, for a LLM $F$, a dataset $D$ with the distribution $P_{D}$ of the set of reasoning chains. Define $p_{F,cases}(X)$ as the probability that the LLM can correctly identify all the cases in $X$ required for discussion, then the probability that the LLM $F$ can give a correct proof is $\sum p_{F,cases}(X)\cdot p_F^{\sum_{i} |X^{NE}_i|+|X^{NE}|+\sum\limits_{selected \ k}|X^{E}_k|} \cdot P_D(X=\{all \{X^{NE}\}, selected \{X^{E}\}\})$
\end{theorem}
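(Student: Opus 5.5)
The plan is to mirror the proof of Theorem~\ref{theorem_6_1}, conditioning on the structure of the required proof and applying the law of total probability over the dataset distribution $P_D$. Fix an instance whose required proof is $X=\{\text{all } \{X^{NE}_j\}_{j\in NE}, \text{selected } \{X^{E}_k\}\}$. I would decompose the event ``$F$ produces a correct proof of this instance'' into independent sub-events. These are:
\begin{enumerate}
\item $F$ correctly identifies the cases that need to be discussed, which happens with probability $p_{F,cases}(X)$ by definition.
\item For each impossible case $j\in NE$, $F$ generates the chain $X^{NE}_{j,1},\dots,X^{NE}_{j,n_j}$ correctly and then correctly asserts the resulting contradiction.
\item For each selected possible case $k$, $F$ generates the chain $X^{E}_{k,1},\dots,X^{E}_{k,n_k}$ correctly.
\end{enumerate}
Then I would multiply the probabilities of these sub-events and sum over instance types.

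The key step is counting proposition selections. By assumption 3 (constant $p_F$ per augmented context) and the factorization $P[X_1,\dots,X_n]=\prod_i P[X_i\mid \text{Augmented Context}]$, each correct next-proposition choice contributes a factor $p_F$. This holds regardless of which case branch we are in, because the case hypothesis is simply absorbed into the augmented context. Chain $X^{NE}_j$ therefore contributes $p_F^{|X^{NE}_j|}$. Identifying the contradiction that closes the branch is one additional selection step, giving one extra factor $p_F$ per impossible case, hence $p_F^{|X^{NE}|}$ in total. Each selected possible chain contributes $p_F^{|X^{E}_k|}$. Multiplying gives
\[
p_{F,cases}(X)\cdot p_F^{\sum_i |X^{NE}_i|+|X^{NE}|+\sum_{\text{selected } k}|X^{E}_k|}.
\]
Case identification must be treated as independent of the subsequent step selections, or equivalently we condition on it having succeeded, since the chains are only well-defined once the cases are fixed.

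Finally, I take the expectation over $X\sim P_D$ exactly as in Theorem~\ref{theorem_6_1}, summing the conditional success probability times $P_D(X=\cdot)$. This yields the stated formula. When there is a single case and $NE=\emptyset$, it reduces to Theorem~\ref{theorem_6_1} as a sanity check.

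I expect the main obstacle to be justifying the multiplicative independence across branches and the accounting of the contradiction step. The resolution is to argue that the model's output is a concatenation of sub-proofs, so the sequential factorization still applies with augmented contexts. The constant-$p_F$ assumption then makes the product depend only on the total number of selection steps, and the contradiction declaration counts as one selection per impossible branch.
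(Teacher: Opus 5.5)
Your proposal is correct and follows essentially the same route as the paper. It reuses the $p_F^{n}$ factorization from Theorem~\ref{theorem_6_1} for each chain, adds one contradiction step per impossible case (the $|X^{NE}|$ term), multiplies by $p_{F,cases}(X)$, and averages over the dataset by grouping instances by proof structure. Your remark about conditioning on correct case identification makes explicit an assumption the paper uses only implicitly when it writes the product $p_{F,cases}(X)\cdot P(\cdots)$.
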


If people define a correct proof as the set of all the reasoning steps in different cases, even for the cases that do not satisfy the requisite of the question, then the Theorem \ref{theorem_6_2} will be simplified as
\begin{theorem}\label{theorem_6_3}
    Under previous assumptions, for a LLM $F$, a dataset $D$ with the distribution $P_{D}$ of the set of reasoning chains. Then the probability that the LLM $F$ can give a correct proof is $\sum\limits_{k=1}^{\infty} p_F^k \cdot P_D(|X|+|X^{NE}|=k)$, or $E_{X \sim P_D}[p_F^{|X|+|X^{NE}|}].$
\end{theorem}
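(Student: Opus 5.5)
We obtain Theorem~\ref{theorem_6_3} as a specialization of Theorem~\ref{theorem_6_2}, with a direct derivation in the style of Theorem~\ref{theorem_6_1} as a cross-check. The key change is the redefinition of a correct proof. A correct proof is now the full collection of reasoning chains for \emph{every} case, both the impossible cases in $NE$ and all cases in $E$, not only the selected ones. We write $X=\{X_1,\ldots,X_m\}$ for this full set of chains and $|X|=\sum_i |X_i|$ for the total number of propositional steps.

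The main step counts how many atomic ``select the next proposition'' events the model must get right. There are $|X|$ inference steps inside the chains. In addition, for each impossible case $j\in NE$ the model must recognize the contradiction that closes the chain, which is one further selection event. This gives exactly the $|X^{NE}|$ extra exponent already present in Theorem~\ref{theorem_6_2}. Under the assumption that every augmented context yields the same success probability $p_F$, and that successive selections factor as in the variant graphical model, the probability of producing the whole correct structure given $X$ is $p_F^{|X|+|X^{NE}|}$.

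The factor $p_{F,cases}(X)$ must then be absorbed. Once every case, including the unselected ones in $E$, must be written out, identifying the cases is no longer a separate guess. Each case is introduced by reading off a disjunct of a premise, and we count that selection as the first step of its chain. So case identification is already included in $|X|$, and we set $p_{F,cases}\equiv 1$. We then condition on the realized structure and sum with the law of total probability over $P_D$, grouping structures by $k=|X|+|X^{NE}|$. This gives $\sum_{k\ge 1} p_F^k\, P_D(|X|+|X^{NE}|=k)=E_{X\sim P_D}[p_F^{|X|+|X^{NE}|}]$. The interchange of sum and expectation is harmless because all terms are nonnegative and bounded by $1$.

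The main obstacle is justifying that the case-identification factor truly disappears rather than being assumed away. The cleanest approach is to state explicitly, as part of the modified definition, that spawning each case counts as one proposition selection from the augmented context. The counting then reduces exactly to the linear setting of Theorem~\ref{theorem_6_1}, with chains concatenated and each contradiction adding one step.
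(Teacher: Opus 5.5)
Your argument is essentially the paper's: count the $|X|$ steps inside all the chains plus one contradiction step for each case in $NE$, use the constant-$p_F$ factorization from Theorem~\ref{theorem_6_1} to get $p_F^{|X|+|X^{NE}|}$ per instance, and average over $D$ grouping instances by $k=|X|+|X^{NE}|$. The only difference is that you explicitly justify dropping $p_{F,cases}$ by counting each case-spawning selection as the first step of its chain, whereas the paper's proof omits that factor without comment.
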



From Theorems \ref{theorem_6_2} and \ref{theorem_6_3}, we can identify \textbf{two primary factors leading to LLMs' errors} in proof-by-cases problems: \textbf{incorrect selection of cases} (reflected by a low value of $p_{F,cases}$) and \textbf{an excessive number of steps} required to show the logical chains for each valid case (reflected by a large exponent $k$ in $p_F^k$). These theoretical findings align with the issues observed in our experimental results, demonstrating the reasonableness of our theoretical framework for the proof-by-cases problems.

\section{Limitation and Conclusion}
In this paper, we introduced the PC-FOL dataset, a novel dataset for evaluating the FOL reasoning capabilities of LLMs, which contains expert-annotated instances under two types of FOL questions: linear-reasoning and proof-by-cases. Although the scale of this dataset is limited compared to the extensive corpora typically used for LLM pretraining, the dataset can still be used for LLM evaluation.

Extensive experiments involving  widely-used LLMs revealed that while LLMs perform relatively well on traditional linear reasoning FOL tasks, they struggle significantly with problems requiring proof-by-cases
technique. 

Finally, to explain this phenomenon, we proposed specific theoretical assumptions and designed a variant probabilistic graphical model specifically for FOL problems. Note that, in our assumptions, the probability of a correct single-step inference is set as constant, which simplifies the inference process of complex LLM models.

This work highlighted the limitations of current LLMs and underscores the need for robust methods that can handle a broader spectrum of real-world problem-solving, particularly case-based reasoning.



\newpage
\bibliography{bibliography_llm_mathlogic_proof}
\bibliographystyle{plainnat}

\section*{Checklist}



\begin{enumerate}

  \item For all models and algorithms presented, check if you include:
  \begin{enumerate}
    \item A clear description of the mathematical setting, assumptions, algorithm, and/or model. [Yes]
    \item An analysis of the properties and complexity (time, space, sample size) of any algorithm. [Not Applicable]
    \item (Optional) Anonymized source code, with specification of all dependencies, including external libraries. [Yes]
  \end{enumerate}

  \item For any theoretical claim, check if you include:
  \begin{enumerate}
    \item Statements of the full set of assumptions of all theoretical results. [Yes]
    See Section \ref{section_theoretical_analysis}.
    \item Complete proofs of all theoretical results. [Yes]
    See Appendix \ref{appendix_proof}.
    \item Clear explanations of any assumptions. [Yes]     
    See Appendix \ref{appendix_proof}.
  \end{enumerate}

  \item For all figures and tables that present empirical results, check if you include:
  \begin{enumerate}
    \item The code, data, and instructions needed to reproduce the main experimental results (either in the supplemental material or as a URL). [Yes] See Appendix \ref{anonymous_link_of_dataset}.
    \item All the training details (e.g., data splits, hyperparameters, how they were chosen). [Yes]
    \item A clear definition of the specific measure or statistics and error bars (e.g., with respect to the random seed after running experiments multiple times). [Yes]
    \item A description of the computing infrastructure used. (e.g., type of GPUs, internal cluster, or cloud provider). [Yes]
  \end{enumerate}

  \item If you are using existing assets (e.g., code, data, models) or curating/releasing new assets, check if you include:
  \begin{enumerate}
    \item Citations of the creator If your work uses existing assets. [Yes]
    \item The license information of the assets, if applicable. [Yes]
    \item New assets either in the supplemental material or as a URL, if applicable. [Yes]
    \item Information about consent from data providers/curators. [Not Applicable]
    \item Discussion of sensible content if applicable, e.g., personally identifiable information or offensive content. [Not Applicable]
  \end{enumerate}

  \item If you used crowdsourcing or conducted research with human subjects, check if you include:
  \begin{enumerate}
    \item The full text of instructions given to participants and screenshots. [Not Applicable]
    \item Descriptions of potential participant risks, with links to Institutional Review Board (IRB) approvals if applicable. [Not Applicable]
    \item The estimated hourly wage paid to participants and the total amount spent on participant compensation. [Not Applicable]
  \end{enumerate}

\end{enumerate}

\clearpage
\appendix
\thispagestyle{empty}

\onecolumn
\aistatstitle{Paper Submission to AISTATS 2026: \\
Supplementary Materials}


\section{Distribution of number of Premises}\label{distribution_of_premises}


The distribution of the number of premises in each instance of the PC-FOL dataset is shown in Figure \ref{figure_distribution_premises}. The other part, PC-FOL-Replace, has almost the same distribution as the PC-FOL dataset. Therefore, we do not list the same image multiple times.

\begin{figure*}[!h]
  \centering
  \includegraphics[width=\linewidth]{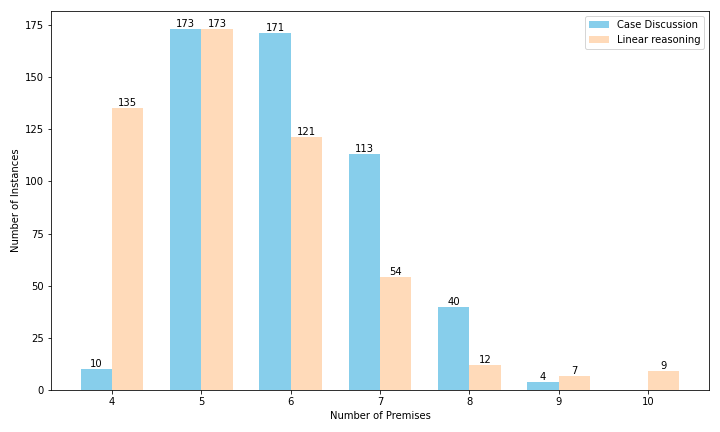}
  \caption{The distribution of the number of premises in each instance of the PC-FOL dataset. Blue color represents the distribution of the proof-by-case type problems, and the yellow color represents the distribution of the linear reasoning type problems.}
  \label{figure_distribution_premises}
\end{figure*}

\section{Dataset Description}

\subsection{Data collection process}\label{data_collection_process}

Most instances in our PC-FOL dataset come from the instances in the FOLIO \citep{folio2024} dataset with little modifications, or from the exercises or homework in some mathematical logic courses. \\
Through carefully and manually checking each instance in FOLIO, we utilize the data that is logically sound and has the correct label. We re-write some instances which have logical issues, and re-label those data with the wrong label. 
We also utilize the exercises or homework in mathematical logic courses. By modifying the exercises in some lecture notes, or re-writing the homework from FOL formulas to natural language, many instances in the proof-by-case category have been collected in our dataset. \\
To make the logical reasoning complicated enough, the number of premises in each instance is at least 4. Unlike the linear-reasoning problems shown in previous work, the reasoning depth of proof-by-cases problems can not be defined well and the reason is given in Appendix \ref{reason_of_depth_not_well}. Therefore, we show the distribution of the number of premises in each instance instead. The figure of the distribution is shown in Appendix \ref{distribution_of_premises}. \\
For each instance, we provide a natural language answer, and the answers are all written by our professional annotators. \\
The quality control process is shown in Appendix \ref{appendix_quality_control}.

\subsection{Quality control of PC-FOL dataset}\label{appendix_quality_control}

\textbf{Dataset Annotation} 
(1) To ensure that our dataset is annotated with high precision and professionalism, our annotators are all professional mathematicians with a math PhD degree and have taken at least one graduate-level course related to mathematical logic or discrete math. 
(2) Our annotators wrote the premises and the conclusions by using the same format as the FOLIO dataset. 
(3) Our annotators are native English speakers, or have the ability to write professional academic English.

\textbf{Natural Language Quality}
Since our annotators are professional mathematicians, we can make sure that the sentences in our dataset, especially the natural language answers of the instances, have the same writing style as the answers of exercises from graduate-level mathematics textbooks.
Besides, all the sentences are checked with two grammar checking tools: a traditional software called Grammarly, and a LLM-based software called Writefull.

\textbf{Cross checking}
Each instance and its answer are double-checked by two annotators. 

\subsection{Anonymous Link}\label{anonymous_link_of_dataset}

We provide an anonymous link for downloading our PC-FOL dataset: \url{https://www.kaggle.com/datasets/c425f3c00b279f2b843a71456b9ecc9ddba8c0d12b9f83deb6bd6b0435db85c6}.

\section{Explanation about the reasoning depth}\label{reason_of_depth_not_well}

In this section, we give an example to show why we believe that the reasoning depth of proof-by-cases FOL question can not be defined well.

Suppose that we have such premises
\begin{itemize}
    \item If C holds, then either A or B holds.
    \item If C holds and B holds, then A holds.
    \item If C holds and B does not hold, then A does not hold.
\end{itemize}

And a statement “C holds”.

To evaluate whether the statement is true or false, the standard way is to consider two cases:

Case 1: C holds.

Case 2: C does not hold.

In case 1, since C holds, consider two subcases:

Case 1.1 A holds. Then by premise 1, B does not hold. By premise 3, A does not hold, which makes a contradiction.

Case 1.2 A does not hold. Then by premise 1, B holds. By premise 2, A holds, which makes a contradiction too.

Since both subcases are impossible, case 1 is impossible.

When attempting to construct a reasoning chain for Case 1, one may observe the following process:

(A holds)

$\rightarrow$ premise 1, (B does not hold)

$\rightarrow$ premise 3, (A does not hold)

$\rightarrow$ premise 1, (B holds)

$\rightarrow$ premise 2, (A holds)

$\rightarrow$ ... (cycle continues)

As a result, we obtain a cyclic graph, rather than a typical linear chain of standard reasoning questions. Because the length of the longest path in a cyclic graph is not well-defined (especially in graph theory), it is unusual to define the reasoning depth for such FOL questions. Previous work ignored this situation and they may give the depth as the depth in one case.

\section{Prompt used in Experiments}\label{prompt_of_experiments}

The prompts used in our experiments are shown below. The templates are designed from the ideas of several published FOL reasoning papers. 

\subsection{Prompts used for generating proofs and labels}

Reasoning with proof version:

\fbox{%
  \begin{minipage}{0.95\linewidth}
Using deductive reasoning, find out the truth values of the conclusions based on the premises. The truth value can be True, False or Uncertain. First show the reasoning process, and then output the truth value in the format of "Truth value: ".\\
Premises: \texttt{nl\_premises}\\
Conclusion: \texttt{conclusion}
  \end{minipage}
}

Multi-shot version:

\fbox{%
  \begin{minipage}{0.95\linewidth}
Here are some examples of deductive reasoning problems and their correct truth values:\\
\texttt{few\_shot\_examples}\\
Now, using deductive reasoning, find out the truth values of the conclusions based on the premises. The truth value can be True, False or Uncertain. First show the reasoning process, and then output the truth value in the format of "Truth value: ".\\
Premises: \texttt{nl\_premises}\\
Conclusion: \texttt{conclusion}
  \end{minipage}
}

Chinese version:

\fbox{%
  \begin{minipage}{0.95\linewidth}
\begin{CJK*}{UTF8}{gbsn}
使用演绎推理，找出结论的真值。真值可以是“真”、“假”或“不确定”。首先展示推理过程，然后以“真值: ”的格式输出真值。\\
前提: \texttt{nl\_premises}\\
结论: \texttt{conclusion}
\end{CJK*}
  \end{minipage}
}

Label-only version:

\fbox{%
  \begin{minipage}{0.95\linewidth}
Find out the truth values of the conclusions based on the premises. The truth value can be True, False or Uncertain. Output the truth value in the format of "Truth value: " directly without any reasoning process.\\
Premises: \texttt{nl\_premises}\\
Conclusion: \texttt{conclusion}
  \end{minipage}
}

\subsection{Prompts used for proof evaluation}

\fbox{%
  \begin{minipage}{0.95\linewidth}
Given a deductive reasoning question, demonstrate whether the two reasoning chains are semantically similar and follow the same reasoning path to derive the final answer. After your explanations, output your decision in the format of "Decision: ". Your decision should be either Yes or No.\\
Premises: \texttt{nl\_premises}\\
Conclusion: \texttt{conclusion}\\
Reasoning chain A: \texttt{reasoning\_chain\_a}\\
Reasoning chain B: \texttt{reasoning\_chain\_b}
  \end{minipage}
}

\section{Details of Experiments}\label{appendix_detail_of_experiment}
In this section, we show the details of our experiments in Section \ref{section_experiments}.

\subsection{LLM Model version and cost}
We utilized API services provided by commercial companies to access these LLMs via the cloud services. The versions used are as follows.

\begin{itemize}
    \item GPT-4o: GPT-4o-2024-11-20 
    \item GPT-4.1: GPT-4.1-2025-04-14 
    \item Llama-3-70B: Llama-3.3-70B-Instruct 
    \item Deepseek-V3: Deepseek-V3-2025-03-24 
\end{itemize}

We use the default parameter to request all of the responses of these models. The total amount of the expenditure is approximately $\$$600.

\subsection{Natural language FOL reasoning}

For the PC-FOL dataset, we evaluate zero-shot (0-shot) and three-shot (3-shot) tasks on all tested LLMs. The few-shot examples are randomly selected from the dataset and include corresponding manually written proofs, each with a different story ID.

For the PC-FOL-Replace dataset, we similarly evaluate 0-shot and 3-shot tasks on all tested LLMs. The few-shot examples are randomly drawn from the PC-FOL dataset, accompanied by corresponding manually written proofs, and assigned a different story ID.


\subsection{Proof Evaluation}

\textbf{Pass@k Evaluation}

In this experiment, we sample $k$ different proofs by GPT-4o model.

\textbf{Few-shot prompting}

In this experiment, the few-shot prompting results are generated by GPT-4o. The evaluation LLM model in the ROUGE metric processing is set as GPT-4o.

\section{Experimental results for more models}\label{Appendix_more_experiments}

\subsection{Different Hyperparameters for Qwen3 Model}
In this section, we present the accuracy results for Qwen3 \citep{yang2025qwen3technicalreport} models with different hyperparameters and Deepseek-R1 models.  Table \ref{table_few_shot_nl_reasoning_eng_full} shows the accuracy results of zero-shot and few-shot prompting on PC-FOL dataset.

\begin{table}[!h]
  \caption{Logical reasoning accuracy results of zero-shot and few-shot prompting on English PC-FOL dataset.}
  \label{table_few_shot_nl_reasoning_eng_full}
  \centering
  \small
  \begin{tabular}{c|cccc}
    \toprule
    Model     & \makecell[l]{PC-FOL \\ Linear reasoning}  & \makecell[l]{PC-FOL \\ proof-by-case} & \makecell[l]{PC-FOL-Replace \\ Linear reasoning} & \makecell[l]{PC-FOL-Replace \\ proof-by-case} \\
    \midrule
    GPT-4o 0-shot & 85.13 & 51.08 & 84.34 & 51.66 \\
    GPT-4o 3-shot & 83.17 & 55.58 & 82.39 & 52.64 \\
    GPT-4.1 0-shot & 89.24 & 69.86 & 88.45 & 63.80 \\
    GPT-4.1 3-shot & 86.30 & 71.23 & 84.34 & 68.69 \\
    o4-mini 0-shot & 90.80 & 79.84 & 88.26 & 77.89 \\
    o4-mini 3-shot & 90.02 & 80.43 & 88.45 & 79.45 \\
    \midrule
    Llama 3-70B 0-shot & 80.63 & 46.77 & 79.84 & 50.88 \\
    Llama 3-70B 3-shot & 83.56 & 54.21 & 80.63 & 53.62 \\
    Deepseek-V3 0-shot & 89.63 & 76.71 & 86.69 & 73.39 \\
    Deepseek-V3 3-shot & 90.02 & 77.10 & 85.71 & 72.02 \\
    DeepSeek-R1 0-shot & 91.39 & 82.16 & 86.30 & 80.04 \\
    DeepSeek-R1 3-shot & 89.82 & 82.00 & 87.08 & 80.04 \\
    Qwen3-4B 0-shot & 85.32 & 61.45 & 83.76 & 55.77 \\
    Qwen3-4B 3-shot & 82.97 & 64.97 & 81.60 & 60.67 \\
    Qwen3-8B 0-shot & 84.93 & 61.64 & 84.93 & 59.69 \\
    Qwen3-8B 3-shot & 86.11 & 63.60 & 86.69 & 60.86 \\
    Qwen3-14B 0-shot & 87.28 & 63.60 & 84.54 & 59.10 \\
    Qwen3-14B 3-shot & 89.04 & 66.34 & 86.89 & 64.97 \\
    Qwen3-4B-think 0-shot & 87.67 & 74.56 & 85.52 & 72.21 \\
    Qwen3-4B-think 3-shot & 88.85 & 75.15 & 86.89 & 72.21 \\
    Qwen3-8B-think 0-shot & 89.43 & 77.30 & 84.93 & 73.97 \\
    Qwen3-8B-think 3-shot & 90.22 & 80.63 & 88.45 & 75.54 \\
    Qwen3-14B-think 0-shot & 91.39 & 80.82 & 87.87 & 77.10 \\
    Qwen3-14B-think 3-shot & 91.59 & 79.06 & 85.91 & 76.52 \\
    \bottomrule
  \end{tabular}
\end{table}

\subsection{Different random seeds or temperatures}
In this section, we present additional experiments evaluating the GPT-4o model on our dataset. Table \ref{table_appendix_gpt4o_random_seed_1} and Table \ref{table_appendix_gpt4o_temperature_1} show the results of the experiments across different random seeds and temperatures. These experiments demonstrate that varying random seeds or temperatures slightly affects the observed performance gap between linear-reasoning and proof-by-cases instances.

\begin{table}[!h]
      \caption{Evaluating the GPT-4o model on PC-FOL dataset. Random seeds of the model are fixed as 1.}
      \label{table_appendix_gpt4o_random_seed_1}
      \centering
      \small
      \begin{tabular}{c|cc>{\columncolor{gray!20}}c}
        \toprule
           \  & \makecell[l]{PC-FOL \\ Linear-Reasoning \\ Accuracy}  & \makecell[l]{PC-FOL \\ Proof-by-Cases \\ Accuracy} &\makecell[l]{PC-FOL \\ Acc. Gap}\\
        \midrule
        Temperature 0   & 82.39 & 54.99 & \textbf{27.40}{\small ↓}\\
        Temperature 0.5 & 83.17 & 55.58 & \textbf{27.59}{\small ↓}\\
        Temperature 1   & 82.39 & 52.45 & \textbf{29.94}{\small ↓}\\
        \midrule
        Average & 82.65 & 54.34 & \textbf{28.31}{\small ↓}\\
        \midrule
        \makecell[l]{Standard \\ Deviation} & 0.4503 & 1.6632 & - \\
        \bottomrule
      \end{tabular}
\end{table}

\begin{table}[!h]
      \caption{Evaluating the GPT-4o model on PC-FOL dataset. Temperatures of the model are fixed as 1.}
      \label{table_appendix_gpt4o_temperature_1}
      \centering
      \small
      \begin{tabular}{c|cc>{\columncolor{gray!20}}c}
        \toprule
           \  & \makecell[l]{PC-FOL \\ Linear-Reasoning \\ Accuracy}  & \makecell[l]{PC-FOL \\ Proof-by-Cases \\ Accuracy} &\makecell[l]{PC-FOL \\ Acc. Gap}\\
        \midrule
        Random seed 1   & 82.39 & 52.45 & \textbf{29.94}{\small ↓}\\
        Random seed 11 & 84.15 & 52.25 & \textbf{31.90}{\small ↓} \\
        Random seed 22   & 82.19 & 48.73 & \textbf{33.46}{\small ↓}\\
        \midrule
        Average & 82.91 & 51.14  & \textbf{31.77}{\small ↓}\\
        \midrule
        \makecell[l]{Standard \\ Deviation} & 1.0785 & 2.0924 & - \\
        \bottomrule
      \end{tabular}
\end{table}


\section{Fine-tuning Results}\label{appendidx_fine_tuning_results}

\subsection{Settings}
To evaluate the impact of proof-by-case data on improving reasoning capabilities, we train our models on two distinct dataset variants: one includes expert proofs in the completion block, while the other contains only ground-truth labels as target answers. The datasets are split by 70\%/15\%/15\% as mentioned in Section~\ref{section_experiments}.

We fine-tune an encoder-decoder model, Flan-T5-large\citep{flant5}, alongside 2 LLMs, including Llama-3.1-8B-Instruct\citep{grattafiori2024llama3herdmodels} and Qwen3-0.6B\citep{qwen2,qwen2.5}.
For the Flan-T5-large model, we conduct training on a single NVIDIA A100 GPU with a batch size of 8 and a learning rate of 1e-4.

For the remaining LLMs, we adopt a learning rate of 5e-5. Specifically, we train Llama3-8B and Qwen3-0.6B on 8 and 4 NVIDIA A800 GPUs, respectively, maintaining a consistent batch size of 16 across all experiments.

All the fine-tuning experiments utilize the AdamW optimizer \citep{loshchilov2017decoupled} with a cosine annealing learning rate scheduler \citep{loshchilov2016sgdr} and a warm-up ratio of 10\% during training.
Models are trained for 3 epochs.

\subsection{Results and Analysis}
Table \ref{table_fine_tuning} shows the fine-tuning results by using our PC-FOL dataset.

Flan-T5 exhibits consistent performance gains, while the accuracy improvement across LLMs ranges from 4\% to 9\%. However, we also observe a decline in label accuracy, where proof-by-case performance on Qwen3-0.6B even decreased by 6.58\%, in the with proof setting after fine-tuning, and the validation loss suggests overfitting.

To further investigate this unexpected phenomenon, we conduct an experiment where we train Qwen3-0.6B on the linear reasoning portion of the data and evaluate it on the proof-by-case portion—and vice versa. The results are shown in Table \ref{table_fine_tuning_qwen3}.

Our findings reveal that training on linear proof tokens negatively impacts performance in 
proof-by-case problems
, whereas models trained on proof-by-case data show significant improvement in simpler linear reasoning tasks.

This provides a plausible explanation for the lower proof-by-case accuracy in fine-tuned models:
when trained on mixed data, models tend to prioritize easily learnable patterns for simpler problems, which may hinder their ability to reason deeply in more complex scenarios.

Conversely, when proof procedures are omitted, models appear to "learn" label prediction superficially without truly "understanding" the underlying reasoning.

\begin{table}[!h]
  \caption{Fine-tuning results (Label Accuracy in \%) of LLMs trained on the full set of PC-FOL dataset and tested on the test set of Case/Linear.}
  \label{table_fine_tuning}
  \centering
  \small
  \begin{tabular}{c|c|cccc}
    \toprule
    \multirow{2}{*}{Model} & \multirow{2}{*}{\makecell{w/ SFT}} 
        & \multicolumn{2}{c}{\makecell[c]{PC-FOL w/ proof}} 
        & \multicolumn{2}{c}{\makecell[c]{PC-FOL w/o proof}} \\
    & & \makecell[c]{Linear reasoning} & \makecell[c]{proof-by-case} 
      & \makecell[c]{Linear reasoning} & \makecell[c]{proof-by-case} \\
    \midrule
    \multirow{2}{*}{Flan-T5-large}
        & Yes & 47.37\textcolor{ForestGreen}{\small ↑} & 35.53\textcolor{ForestGreen}{\small ↑} & 57.89\textcolor{ForestGreen}{\small ↑} & 47.37\textcolor{ForestGreen}{\small ↑} \\
        & No & 46.05 & 30.26 & 43.42 & 36.84 \\
    \midrule
    \multirow{2}{*}{Llama-3.1-8B-Instruct}
        & Yes &73.68\textcolor{ForestGreen}{\small ↑} & 48.68\textcolor{BrickRed}{\small ↓} & 59.21\textcolor{ForestGreen}{\small ↑} & 46.05\textcolor{ForestGreen}{\small ↑}\\
        & No &64.47 & 50.00 & 57.89 & 40.79\\
    \midrule
    \multirow{2}{*}{Qwen3-0.6B}
    & Yes & 61.84\textcolor{ForestGreen}{\small ↑} & 32.89\textcolor{BrickRed}{\small ↓} & 40.79\textcolor{NavyBlue}{\small -} & 36.84\textcolor{ForestGreen}{\small ↑}\\
    & No & 57.89 & 39.47 & 40.79 & 32.89\\
    \bottomrule
  \end{tabular}
\end{table}

\begin{table}[!h]
  \caption{Fine-tuning results of Qwen3 trained on the Linear/Case of PC-FOL dadaset and tested on the test set of Case/Linear.}
  \label{table_fine_tuning_qwen3}
  \centering
  \small
  \begin{tabular}{c|c|cccc}
    \toprule
    \multirow{2}{*}{Model} & \multirow{2}{*}{\makecell{SFT Data}}  & \multicolumn{2}{c}{\makecell[c]{PC-FOL w/ proof}} & \multicolumn{2}{c}{\makecell[c]{PC-FOL w/o proof}} \\
    & & \makecell[c]{Linear reasoning} & \makecell[c]{proof-by-case} 
      & \makecell[c]{Linear reasoning} & \makecell[c]{proof-by-case} \\
    \midrule
    \multirow{3}{*}{Qwen3-0.6B} & Linear & - & 36.84 & - & 47.37\\
    & Case & 72.37 & - &  51.32  & -\\
    & Null & 57.89 & 39.47 & 40.79 & 32.89\\
    \bottomrule
  \end{tabular}
\end{table}


\section{Proofs of the Theorems}\label{appendix_proof}

\subsection{Proof of Theorem \ref{theorem_6_1} for Linear-Reasoning problem}

Recall the assumptions and definitions:

\begin{enumerate}
    \item Define each $X_i$ as a proposition described in the given premises.
    \item The conditional probability $P[X_{i}|Context, X_1, \ldots, X_{i-1}]$ can be viewed as selecting the next proposition based on the augmented context.
    \item Fixed the LLM as a function $F$. Define $p_F(Augmented \ Context)$ as the correctness percentage of selecting the correct proposition from context. The values for different augmented contexts are assumed to be the same, defined as $p_F$.
\end{enumerate}

First, we give the explanations of the assumptions, and the reasons why these assumptions can hold.

\subsubsection{Define each $X_i$ as a proposition described in the given premises.}

Traditional analysis for the neural-based NLP models directly considered the probability distribution as $P[X_{1},\ldots,X_{n}]=\prod _{i=1}^{n}P[X_{i}|Context, X_1, \ldots, X_{i-1}]$. Although it is reasonable for analyzing LLMs, this method can not reveal the structure of one proposition (for example, “Grass is green”, “One is not a computer”) in a generated sentence. 

Therefore, to emphasize the logical relationships between the propositions themselves rather than the specific meaning of each word, we define $X_i$ as a proposition in the given premises. Under this definition, various equivalent natural language expressions can be represented by the same $X_i$, such as "$X_i$ is true" and "the truth value of $X_i$ is 1."

\subsubsection{The conditional probability $P[X_{i}|Context, X_1, \ldots, X_{i-1}]$ can be viewed as selecting the next proposition based on the augmented context.}

That is, we assumed that, for any prompt, with any variant of the sentence structure, which includes the needed context (the \textbf{full} given premises, the previous reasoning steps, the previous state, asking for the next reasoning step, etc.), the distribution of selecting the next proposition for the LLM is the same.

For example, suppose we have the premise `A is true, then B is true', and we have the fact (or in previous steps) that `A is true', and we want to ask the LLM for the next step or if B is true. The input can be ``Suppose we have a premise `A is true, then B is true', and we know that `A is true', what can we find based on these facts?'', or ``We have already known that `A is true, then B is true', and the previous theorem shows `A is true', will `B' be a correct proposition?''. 

Under this assumption, the augmented context contains all the needed information for the next-step reasoning.

\subsubsection{Fixed the LLM as a function $F$. Define $p_F(Augmented \ Context)$ as the correctness percentage of selecting the correct proposition from context. The values for different augmented contexts are assumed to be the same, defined as $p_F$.}

This assumption is based on a basic concept of a well-trained LLM: when giving full premises and the previous state, asking for the next reasoning step (must be linear connectivity), the accuracy of selecting the next step should be almost the same. Since all the training data are selected randomly and batch-wise, a well-trained LLM has such property theoretically.


\subsubsection{Proof}

Based on the previous assumptions, we can prove the theorem \ref{theorem_6_1}:

Under previous assumptions, for a LLM $F$, a linear-reasoning dataset $D$ with the distribution $P_{D}$ of reasoning steps, the probability that the LLM $F$ can give a correct proof is $\sum\limits_{k=1}^{\infty} p_F^k \cdot P_D(X=k)$, or $E_{X \sim P_D}[p_F^{|X|}]$

\begin{proof}
    From definition 1, the probabilistic graphical model $P[X_{1},\ldots ,X_{n}]=\prod _{i=1}^{n}P[X_{i}|{\text{pa}}(X_{i})]$ can be re-written as 
    \begin{equation}
        P[X_{1},\ldots ,X_{n}]=\prod _{i=1}^{n}P[X_{i}|Context, X_{1}, \cdots, X_{i-1})]
    \end{equation}
    for proposition $X_i$, and a linear-reasoning type step-by-step proof $X_1 \rightarrow X_2 \rightarrow \cdots \rightarrow X_{n}$.
    
    Then, by definition 2, the formula has the form of $P[X_{1},\ldots ,X_{n}]=\prod _{i=1}^{n}P[X_{i}|Augmented \ Context]$, where ``Augmented Context" represents a input with full premises and previous state.
    
    Consequently, by premise 3, since we assume that all the state transition has the same accuracy, we can get the probability formula of generating correct proof sequence $X_{1},\ldots ,X_{n}$ as 
    \begin{equation}
    \begin{split}
        P[X_{1},\ldots ,X_{n}]&=\prod _{i=1}^{n}P[X_{i}|Augmented \ Context]\\
                              &=\prod _{i=1}^{n}p_F(Augmented \ Context)\\
                              &= \prod _{i=1}^{n}p_F\\
                              &=p_F^n
    \end{split}
    \end{equation}

    As a result, when given a linear-reasoning dataset $D$ with the distribution $P_{D}$ of reasoning steps, the probability that the LLM $F$ can give a correct proof is
    \begin{equation}
    \begin{split}
        P_{data\sim D}(correct \ proof)
        &=\frac{1}{|D|}\sum\limits_{|D|}P(correct \ proof \ for \ data_i)\\
        &=\frac{1}{|D|}\sum\limits_{|D|}P(X_{i,1},\ldots ,X_{i,n_i})\\
        &=\frac{1}{|D|}\sum\limits_{|D|}p_F^{n_i} \ (\ n_i \ represents \ the \ number \ of \ reasoning \ steps)\\
        &=\frac{1}{|D|}\sum\limits_{k=1}^{\infty}p_F^k \cdot N_k \ (N_k \ represents \ the \ number \ of \ instances \ need \ k \ steps \ to \ prove)\\
        &=\sum\limits_{k=1}^{\infty}p_F^k \cdot \frac{N_k}{|D|}\\
        &=\sum\limits_{k=1}^{\infty} p_F^k \cdot P_D(X=k)\\
        &=E_{X \sim P_D}[p_F^{|X|}]\\
    \end{split}
    \end{equation}
\end{proof}

By using the theorem \ref{theorem_6_1}, and using the distribution of the proof steps in our dataset (shown in Appendix \ref{distribution_of_proof_steps}), we can calculate the probability $p_F$, the accuracy for getting the next step for different LLM $F$. Table \ref{figure_graphical_model_linear_reasoning} shows that the correct proof rate for GPT-4o over our dataset is 80.63$\%$, by using Theorem \ref{theorem_6_1}, we can calculate that $p_{GPT-4o}=92.62\%$. Thus, the probability of GPT-4o generating a correct proof for a linear-reasoning mathematical logic problem with $k$ steps is estimated by $0.9262^k$.

We compare the ground truth of the correctness ratio and the estimated correctness ratio $0.9262^k$ in Figure \ref{figure_comparison_correctness}. In this Figure, we can find that, for the same number of proof steps,the estimated correctness ratio is almost the same as the ground truth value. Therefore, we can conclude that our theory is suitable for analyzing the reasoning ability for LLM over linear-reasoning mathematical logic problems.

\begin{figure}[!h]
  \centering
  \includegraphics[width=\linewidth]{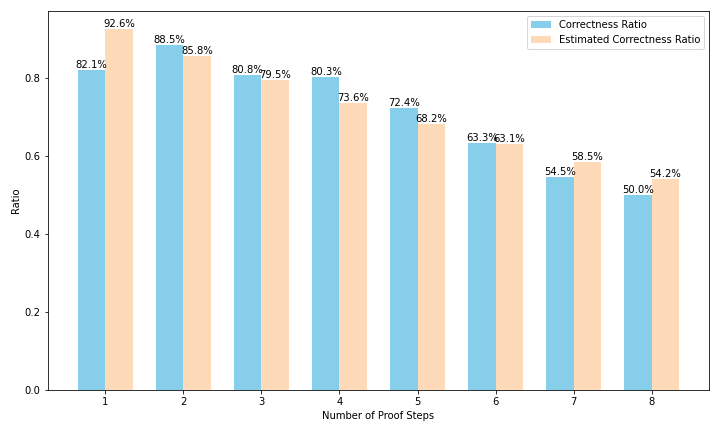}
  \caption{Comparison between the ground truth of the correct proof ratio and the estimated correctness ration over the PC-FOL Linear-Reasoning dataset.}
  \label{figure_comparison_correctness}
\end{figure}

\subsection{Proof of Theorems for Proof-by-Cases problem}\label{appendix_proof_proofbycases}

We give two examples to show the structure of the standard proof-by-cases problem.

\begin{figure}[!h]
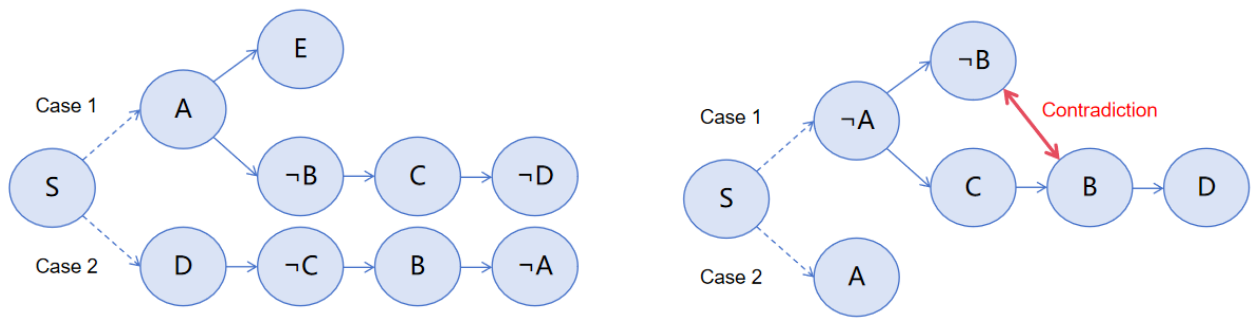

  \centering
  \includegraphics[width=.45\textwidth]{graphical_model_proof_by_cases.png}
  \hspace{1cm}
  \includegraphics[width=.45\textwidth]{graphical_model_subcase_contradiction.png}
  \caption{\textbf{Left}: The abstracted reasoning chain for right side example of Table \ref{table_of_difference_between_linear_and_case}. \textbf{Right}: The abstracted reasoning chain for a typical example that existing a contradiction in a subcase.}
  \label{figure_graphical_model_proof_by_case2}
\end{figure}

The first is the example shown in the right side of Table \ref{table_of_difference_between_linear_and_case}. The premises of the example are:
\begin{itemize}
    \item Any person that is tall does not major in physics. 
    \item All people who are not tall study quantum computing.
    \item All students major in math or physics. 
    \item If a person majors in math, then the person studies algebraic geometry. 
    \item If a person majors in math, then the person does not study quantum computing. 
    \item Billy is a student who studies algebraic geometry. 
\end{itemize}

When drawing the reasoning chain of this example, we define S as the proposition "Billy is a student, and is a person", A as "Billy majors in math", B as "Billy studies quantum computing", C as "Billy is tall", D as "Billy majors in physics", and E as "Billy studies algebraic geometry." Thus, the reasoning chain can be drawn as the left part in Figure \ref{figure_graphical_model_proof_by_case2}.

Another example we call it "subcase-contradiction", since there exists a contradiction when trying to get the truth value of some propositions in a subcase. The premises of the example are:

\begin{itemize}
    \item A neuroimaging technique is either an invasive neuroimaging technique or a noninvasive neuroimaging technique.
   \item All noninvasive neuroimaging techniques provide a spatial resolution of brains.
   \item If a technique provides a spatial resolution of brains, then it is a measurement of brain activity.
   \item All measurements of brain activity are used by neuroscience researchers.
   \item FMRI is either a measurement of brain activity or a noninvasive neuroimaging technique.
   \item FMRI is a neuroimaging technique.
\end{itemize}

When drawing the reasoning chain of this example, we define S as the proposition "FMRI is a neuroimaging technique", A as "FMRI is an invasive neuroimaging technique", B as "FMRI is a measurement of brain activity", C as "FMRI provides a spatial resolution of brains", and D as "FMRI is used by neuroscience researchers". Thus, the reasoning chain can be drawn as the right part in Figure \ref{figure_graphical_model_proof_by_case2}.

Recall the assumption that, for the $i$-th possible case, the reasoning chain for the given question is represented as $\{X^E_{i,1}, \cdots, X^E_{i,n_i}\}$, and for the $i$-th possible case, the reasoning chain for the given question is represented as $\{X^{NE}_{j,1}, \cdots, X^{NE}_{j,n_j}\}$. Under these definitions, the proof should be the set $\{\{X^{NE}_{i,1}, \cdots, X^{NE}_{i,n_i}\} \forall case \ i\in NE, \{X^{E}_{k,1}, \cdots, X^{E}_{k,n_i}\} for \ selected \  case \ k\in E\}$, where the $k$ are means that the $k$-th case of the set $E$ satisfies the given requisite of the question. Now we try to prove the Theorem \ref{theorem_6_2} and \ref{theorem_6_3}.

\subsubsection{Proof of Theorem \ref{theorem_6_2}}
Recall the theorem: 
Under previous assumptions, for a LLM $F$, a dataset $D$ with the distribution $P_{D}$ of the set of reasoning chains. Define $p_{F,cases}$ as the probability that the LLM can correctly identify all the cases required for discussion, then the probability that the LLM $F$ can give a correct proof is $\sum p_{F,cases}(X)\cdot p_F^{\sum_{i} |X^{NE}_i|+|X^{NE}|+\sum\limits_{selected \ k}|X^{E}_k|} \cdot P_D(X=\{all \{X^{NE}\}, selected \{X^{E}\}\})$

\begin{proof}
From the proof of Theorem \ref{theorem_6_1}, we have proven that for a linear-reasoning sequence $X_{i,1},\ldots ,X_{i,n_i}$, the probability of generating such a sequence correctly is $p_F^{n_i}$. 
Thus, to generate a proof with propositions $\{\{X^{NE}_{i,1}, \cdots, X^{NE}_{i,n_i}\} \forall case \ i\in NE, \{X^{E}_{k,1}, \cdots, X^{E}_{k,n_i}\} for \ selected \  case \ k\in E\}$, the number of reasoning steps should be $$\sum_{i} |X^{NE}_i|+|X^{NE}|+\sum\limits_{selected \ k}|X^{E}_k|.$$ 

Note that in order to show the cases in set $NE$ are impossible to exist, the proof should add an extra reasoning step to find the contradiction in such cases. That is the reason why we add $|X^{NE}|$ in the number of reasoning steps.

Hence, we have
\begin{equation}
    \begin{split}
        P_{data\sim D}(correct \ proof)
        &=\frac{1}{|D|}\sum\limits_{|D|}P(correct \ proof \ for \ data_i)\\
        &=\frac{1}{|D|}\sum\limits_{|D|}p_{F,cases}(X) \cdot \\
        &\ \ \ \ \ \ \ \ \ \ \ P(\{\{X^{NE}_{i,1}, \cdots, X^{NE}_{i,n_i}\} \forall case \ i\in NE, \{X^{E}_{k,1}, \cdots, X^{E}_{k,n_i}\} for \ selected \  case \ k\in E\})\\
        &=\frac{1}{|D|}\sum\limits_{|D|}p_{F,cases}(X) \cdot p_F^{\sum_{i} |X^{NE}_i|+|X^{NE}|+\sum\limits_{selected \ k}|X^{E}_k|} \\
        &=\sum p_{F,cases}(X)\cdot p_F^{\sum_{i} |X^{NE}_i|+|X^{NE}|+\sum\limits_{selected \ k}|X^{E}_k|} \cdot P_D(X=\{all \{X^{NE}\}, selected \{X^{E}\}\})
    \end{split}
\end{equation}

\end{proof}

\subsubsection{Proof of Theorem \ref{theorem_6_3}}
Recall the theorem:
If people define a correct proof as the set of all the reasoning steps in different cases, even for the cases that do not satisfy the requisite of the question, then under previous assumptions, for a LLM $F$, a dataset $D$ with the distribution $P_{D}$ of the set of reasoning chains. Then the probability that the LLM $F$ can give a correct proof is $\sum\limits_{k=1}^{\infty} p_F^k \cdot P_D(|X|+|X^{NE}|=k)$, or $E_{X \sim P_D}[p_F^{|X|+|X^{NE}|}].$

\begin{proof}
From the proof of Theorem \ref{theorem_6_1}, we have proven that for a linear-reasoning sequence $X_{i,1},\ldots ,X_{i,n_i}$, the probability of generating such a sequence correctly is $p_F^{n_i}$. 
To generate a proof with full propositions and logical reasoning steps $\{\{X^{NE}_{i,1}, \cdots, X^{NE}_{i,n_i}\} \forall case \ i\in NE, \{X^{E}_{k,1}, \cdots, X^{E}_{k,n_i}\} \forall case \ k\in E\}$, the number of reasoning steps should be $$\sum_{i} |X^{NE}_i|+|X^{NE}|+\sum_k|X^{E}_k|=|X^{NE}|+|X|.$$ 

Note that in order to show the cases in set $NE$ are impossible to exist, the proof should add an extra reasoning step to find the contradiction in such cases. That is the reason why we add $|X^{NE}|$ in the number of reasoning steps.

Hence, we have
\begin{equation}
    \begin{split}
        P_{data\sim D}(correct \ proof)
        &=\frac{1}{|D|}\sum\limits_{|D|}P(correct \ proof \ for \ data_i)\\
        &=\frac{1}{|D|}\sum\limits_{|D|}P(\{\{X^{NE}_{i,1}, \cdots, X^{NE}_{i,n_i}\} \forall case \ i\in NE, \{X^{E}_{k,1}, \cdots, X^{E}_{k,n_i}\}\forall case \ k\in E\})\\
        &=\frac{1}{|D|}\sum\limits_{|D|}p_F^{|X^{NE}|+|X|} \\
        &=\frac{1}{|D|}\sum\limits_{k=1}^{\infty}p_F^k \cdot N_{|X^{NE}|+|X|=k} \\ &\ \ \ \ \ (N_{|X^{NE}|+|X|} \ represents \ the \ number \ of \ instances \ need \ |X^{NE}|+|X|=k \ steps \ to \ prove)\\
        &=\sum\limits_{k=1}^{\infty}p_F^k \cdot \frac{N_{|X^{NE}|+|X|}=k}{|D|}\\
        &=\sum\limits_{k=1}^{\infty} p_F^k \cdot P_D(|X^{NE}|+|X|=k)\\
        & = E_{X \sim P_D}[p_F^{|X|+|X^{NE}|}]\\
    \end{split}
\end{equation}

\end{proof}

\section{Discussion about Proof-by-Contradiction technique}\label{appendix_detail_proof_by_contradiction}

Mathematically, Proof-by-Contradiction technique can be considered a sub-category of Proof-by-Cases, since all such proofs can be reformulated as a standard Proof-by-Cases proof by the following process.

The “Proof by Contradiction” proof involves:

\begin{enumerate}
    \item Assume the opposite of what people want to prove.
    \item Reason logically from this assumption.
    \item Find a contradiction in the reasoning steps.
    \item Conclude that the assumption must be false.
\end{enumerate}

Then, reformulated as 

\begin{enumerate}
    \item Consider two cases: what people want to prove is False and what people want to prove is True.
    \item In case 1, reason logically from the assumption that “what people want to prove is False”.
    \item Find a contradiction in the reasoning steps of case 1, and then conclude that case 1 can not occur.
    \item Show that case 2, “what people want to prove is true,” is compatible with the given premises, and conclude that only case 2 exists, which means that what people want to prove is True.
\end{enumerate}

Consequently, “Proof by Contradiction” can be viewed as a sub-category of “Proof by Cases.” Therefore, all instances in our dataset utilizing this technique are classified under the “Proof by Cases” category.

\section{Distribution of the number of proof steps}\label{distribution_of_proof_steps}

The distribution of the number of proof steps in each linear-reasoning instance of the PC-FOL dataset is shown in Figure \ref{figure_distribution_premises}.

\begin{figure*}[!h]
  \centering
  \includegraphics[width=\linewidth]{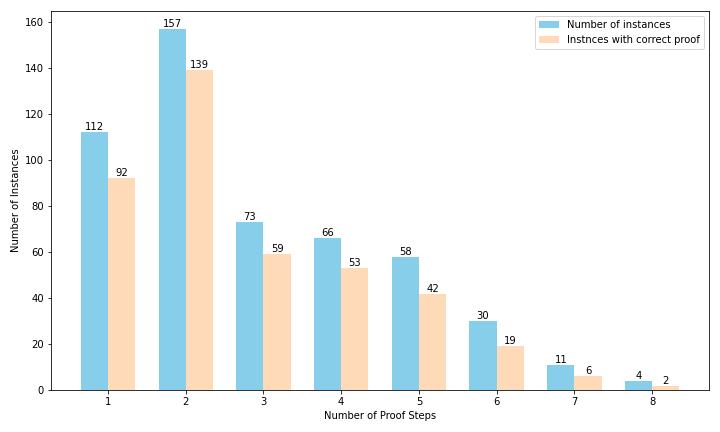}
  \caption{The distribution of the number of proof steps over the PC-FOL dataset (linear-reasoning part).}
  \label{figure_distribution_number_of_steps_linear_reasoning}
\end{figure*}

\end{document}